\providecommand{\tabularnewline}{\\}
\theoremstyle{plain}
\newtheorem{thm}{\protect\theoremname}
\theoremstyle{definition}
\newtheorem{defn}[thm]{\protect\definitionname}
\theoremstyle{plain}
\newtheorem{lem}[thm]{\protect\lemmaname}
\definecolor{mycol}{rgb}{0,0,0.65}
\author{Justin Domke\\
College of Information and Computer Sciences\\
University of Massachusetts Amherst\\
\texttt{domke@cs.umass.edu}
}
\setlist[itemize]{leftmargin=18pt}
\DeclareMathAlphabet{\mathbfsf}{\encodingdefault}{\sfdefault}{bx}{n}
\newcommand{\upgreektemplate}[2]{#2{
\renewcommand{\alpha}{\upalpha}
\renewcommand{\beta}{\upbeta}
\renewcommand{\theta}{\uptheta}
\renewcommand{\gamma}{\upgamma}
\renewcommand{\lambda}{\uplambda}
\renewcommand{\delta}{\updelta}
\renewcommand{\phi}{\upphi}
\renewcommand{\zeta}{\upzeta}
\renewcommand{\Lambda}{\Uplambda}
\renewcommand{\Gamma}{\Upgamma}
\renewcommand{\Delta}{\Updelta}
\renewcommand{\Theta}{\Uptheta}
#1
}}
\newcommand{\upgreek}[1]{\upgreektemplate{#1}{\mathsf}}
\newcommand{\bupgreek}[1]{\upgreektemplate{#1}{\mathbfsf}}
\let\ref\cref
\Crefname{equation}{Eq.}{Eqs.}
\Crefname{fig1}{Fig.}{Figs.}
\Crefname{lem1}{Lem.}{Lems.}
\Crefname{thm1}{Thm.}{Thms.}
\Crefname{section}{Sec.}{Secs.}
\providecommand{\definitionname}{Definition}
\providecommand{\lemmaname}{Lemma}
\providecommand{\theoremname}{Theorem}
\begin{document}
\global\long\def\argmin{\operatornamewithlimits{argmin}}%

\global\long\def\argmax{\operatornamewithlimits{argmax}}%

\global\long\def\prox{\operatornamewithlimits{prox}}%

\global\long\def\diag{\operatorname{diag}}%

\global\long\def\lse{\operatorname{lse}}%

\global\long\def\R{\mathbb{R}}%

\global\long\def\E{\operatornamewithlimits{\mathbb{E}}}%

\global\long\def\P{\operatornamewithlimits{\mathbb{P}}}%

\global\long\def\V{\operatornamewithlimits{\mathbb{V}}}%

\global\long\def\N{\mathcal{N}}%

\global\long\def\L{\mathcal{L}}%

\global\long\def\C{\mathbb{C}}%

\global\long\def\tr{\operatorname{tr}}%

\global\long\def\norm#1{\left\Vert #1\right\Vert }%

\global\long\def\norms#1{\left\Vert #1\right\Vert ^{2}}%

\global\long\def\pars#1{\left(#1\right)}%

\global\long\def\pp#1{(#1)}%

\global\long\def\bracs#1{\left[#1\right]}%

\global\long\def\bb#1{[#1]}%

\global\long\def\verts#1{\left\vert #1\right\vert }%

\global\long\def\vv#1{\vert#1\vert}%

\global\long\def\Verts#1{\left\Vert #1\right\Vert }%

\global\long\def\VV#1{\Vert#1\Vert}%

\global\long\def\angs#1{\left\langle #1\right\rangle }%

\global\long\def\KL#1{[#1]}%

\global\long\def\KL#1#2{KL\pars{#1\middle\Vert#2}}%

\global\long\def\div{\text{div}}%

\global\long\def\erf{\text{erf}}%

\global\long\def\vvec{\text{vec}}%

\global\long\def\b#1{\bm{#1}}%

\global\long\def\r#1{\upgreek{#1}}%

\global\long\def\br#1{\bupgreek{\bm{#1}}}%

\global\long\def\T{\mathcal{T}}%

\global\long\def\ep{\bm{\varepsilon}}%

\global\long\def\rep{\bm{\upvarepsilon}}%

\global\long\def\marker{\checkmark}%

\global\long\def\zo{\bar{\b z}}%

\global\long\def\wo{\bar{\b w}}%

\global\long\def\locscale{\mathrm{LocScale}}%

\global\long\def\w{\b w}%

\global\long\def\v{\b v}%

\global\long\def\wr{\br w}%

\global\long\def\z{\b z}%

\global\long\def\x{\b x}%

\global\long\def\zr{\r z}%

\global\long\def\u{\b u}%

\global\long\def\ur{\r u}%

\global\long\def\gr{\r g}%

\title{Provable Gradient Variance Guarantees for Black-Box Variational Inference}

\maketitle
\maketitle
\begin{abstract}
Recent variational inference methods use stochastic gradient estimators
whose variance is not well understood. Theoretical guarantees for
these estimators are important to understand when these methods will
or will not work. This paper gives bounds for the common ``reparameterization''
estimators when the target is smooth and the variational family is
a location-scale distribution. These bounds are unimprovable and thus
provide the best possible guarantees under the stated assumptions.
\end{abstract}

\section{Introduction}

Take a distribution $p\pp{\z,\x}$ representing relationships between
data $\x$ and latent variables $\z$. After observing $\x$, one
might wish to approximate the marginal probability $p\pp{\x}$ or
the posterior $p\pp{\z|\x}.$ Variational inference (VI) is based
on the simple observation that for any distribution $q\pp{\z},$
\begin{equation}
\log p\pp{\x}=\underbrace{\E_{\zr\sim q}\log\frac{p\pp{\zr,x}}{q\pp{\zr}}}_{\mathrm{ELBO}\pp q}+\KL{q\pp{\zr}}{p\pp{\zr|\x}}.\label{eq:elbo-decomp}
\end{equation}
VI algorithms typically choose an approximating family $q_{\w}$ and
maximize $\mathrm{ELBO}\pp{q_{\w}}$ over $\w$. Since $\log p\pp{\x}$
is fixed, this simultaneously tightens a lower-bound on $\log p\pp{\x}$
and minimizes the divergence from $q_{\w}\pp{\z}$ to the posterior
$p\pp{\z|\x}$.

Traditional VI algorithms suppose $p$ and $q_{\w}$ are simple enough
for certain expectations to have closed forms, leading to deterministic
coordinate-ascent type algorithms \citep{Ghahramani_2001_PropagationAlgorithmsVariational,Blei_2017_VariationalInferenceReview,Winn_2005_VariationalMessagePassing}.
Recent work has turned towards stochastic optimization. There are
two motivations for this. First, stochastic data subsampling can give
computational savings \citep{Hoffman_2013_StochasticVariationalInference}.
Second, more complex distributions can be addressed if $p$ is treated
as a ``black box'', with no expectations available \citep{Ranganath_2014_BlackBoxVariational,Salimans_2013_FixedFormVariationalPosterior,Wingate_2013_AutomatedVariationalInference}.
In both cases, one can still estimate a \emph{stochastic} gradient
of the ELBO \citep{Titsias_2014_DoublyStochasticVariational} and
thus use stochastic gradient optimization. It is possible to address
very complex and large-scale problems using this strategy \citep{Regier_2016_LearningAstronomicalCatalog}.

These improvements in scale and generality come at a cost: Stochastic
optimization is typically less reliable than deterministic coordinate
ascent. Convergence is often a challenge, and methods typically use
heuristics for parameters like step-sizes. Failures do frequently
occur in practice \citep{Yao_2018_YesDidIt,Regier_2017_FastBlackboxVariationala,Fan_2015_FastSecondOrderStochastic}.

To help understand when black-box VI can be expected to work, this
paper investigates the variance of gradient estimates. This is a major
issue in practice, and many ideas have been proposed to attempt to
reduce the variance \citep{Miller_2017_ReducingReparameterizationGradient,Geffner_2018_UsingLargeEnsembles,Roeder_2017_StickingLandingSimple,Buchholz_2018_QuasiMonteCarloVariational,Titsias_2015_LocalExpectationGradients,Ruiz_2016_GeneralizedReparameterizationGradient,Ruiz_2016_OverdispersedBlackBoxVariational,Tan_2018_Gaussianvariationalapproximation}.
 Despite all this, few rigorous guarantees on the variance of gradient
estimators seem to be known (\ref{subsec:Related-work}).

\subsection{Contributions}

This paper studies ``reparameterization'' (RP) or ``path'' based
gradient estimators when $q_{\w}$ is in a multivariate location-scale
family. We decompose $\mathrm{ELBO}\pp{q_{\w}}=l\pp{\w}+h\pp{\w}$
where $h\pp{\w}$ is the entropy of $q_{\w}$ (known in closed-form)
and $l\pp{\w}=\E_{\zr\sim q_{\w}}\log p\pp{\z,\x}.$ The key assumption
is that $\log p\pp{\z,\x}$ is (Lipschitz) \emph{smooth} as a function
of $\z$, meaning that $\nabla_{\z}\log p\pp{\z,\x}$ can't change
too quickly as $\z$ changes. Formally $f\pp{\z}$ is $M$-smooth
if $\VV{\nabla f\pp{\z}-\nabla f\pp{\z'}}_{2}\leq M\VV{\z-\z'}_{2}.$
\begin{description}
\item [{Bound~for~smooth~target~distributions:}] If $\gr$ is the RP
gradient estimator of $\nabla l\pp{\w}$ and $\log p$ is $M$-smooth,
then $\E\VV{\gr}^{2}$ is bounded by a quadratic function of $\w$
(\ref{thm:batch_f_scalar_M}). With a small relaxation, this is $\E\VV{\gr}^{2}\leq aM^{2}\VV{\w-\bar{\w}}^{2}$
(\ref{eq:thm1_user_friendly}) where $\bar{\w}$ are fixed parameters
and $a$ is determined by the location-scale family.
\item [{Generalized~bound:}] We extend this result to consider a more
general notion of ``matrix'' smoothness (\ref{thm:batch_f_matrix_M})
reflecting that the sensitivity of $\nabla_{\z}\log p\pp{\z,\x}$
to changes in $\z$ may depend on the direction of change.
\item [{Data~Subsampling:}] We again extend this result to consider data
subsampling (\ref{thm:g2-stoch}). In particular, we observe that
\emph{non-uniform} subsampling gives tighter bounds.
\end{description}
In all cases, we show that the bounds are unimprovable. We experimentally
compare these bounds to empirical variance.

\section{Setup\label{sec:Setup}}

Given some ``black box'' function $f$, this paper studies estimating
gradients of functions $l$ of the form $l\pp{\w}=\E_{\zr\sim q_{\w}}f\pp{\zr}.$
Now, suppose some base distribution $s$ and mapping $\T_{\w}$ are
known such that if $\ur\sim s$, then $\T_{\w}\pp{\ur}\sim q_{\w}$.
Then, $l$ can be written as

\[
l\pp{\w}=\E_{\ur\sim s}f\pp{\T_{\w}\pp{\ur}}.
\]

If we define $\gr=\nabla_{\w}f\pp{\T_{\w}\pp{\ur}},$ then $\gr$
is an unbiased estimate of $\nabla l$, i.e. $\E\gr=\nabla l\pp{\w}.$
The same idea can be used when $f$ is composed as a finite sum as
$f\pp{\z}=\sum_{n=1}^{N}f_{n}\pp{\z}.$ If $N$ is large, even evaluating
$f$ once might be expensive. However, take any positive distribution
$\pi$ over $n\in\left\{ 1,\cdots,N\right\} $ and sample $\r n\sim\pi$
independently of $\ur$. Then, if we define $\r g=\nabla_{\w}\pi\pp{\r n}^{-1}f_{\r n}\pp{\T_{\w}\pp{\ur}}$,
this is again an unbiased estimator with $\E\r g=\nabla l\pp{\w}.$

Convergence rates in stochastic optimization depend on the variability
of the gradient estimator, typically either via the expected squared
norm (ESN) $\E\Vert\gr\Vert_{2}^{2}$ or the trace of the variance
$\tr\V\gr.$ These are closely related, since $\E\VV{\gr}_{2}^{2}=\tr\V\gr+\VV{\E\gr}_{2}^{2}.$

The goal of this paper is to bound the variability of $\gr$ for reparameterization
/ path estimators of $\gr$. This requires making assumptions about
(i) the transformation function $\T_{\w}$ and base distribution $s$
(which determine $q_{\w})$ and (ii) the target function $f$.

Here, we are interested in the case of affine mappings. We use the
mapping\citep{Titsias_2014_DoublyStochasticVariational}

\[
\T_{\w}\pp{\u}=C\u+\b m,
\]
where $\w=\pp{\b m,C}$ is a single vector of all parameters. This
is the most common mapping used to represent location-scale families.
That is, if $\ur\sim s$ then $\T_{\w}\pp{\ur}$ is equal in distribution
to a location-scale family distribution. For example, if $s=\N(0,I)$
then $\T_{\w}\pp{\ur}$ is equal in distribution to $\N\pp{\b m,CC^{\top}}.$

We will refer to the base distribution as \textbf{standardized} if
the components of $\ur=\pp{\ur_{1},\cdots,\ur_{d}}\sim s$ are iid
with $\E\ur_{1}=\E\ur_{1}^{3}=0$ and $\V\ur_{1}=1.$ The bounds will
depend on the fourth moment $\kappa=\E\bb{\ur_{1}^{4}},$ but are
otherwise independent of $s$.

To apply these estimators to VI, choose $f\pp{\z}=\log\pp{\z,\x}$.
Then $\mathrm{ELBO}\pp{\w}=l\pp{\w}+h\pp{\w}$ where $h$ is the entropy
of $q_{\w}$. Stochastic estimates of the gradient of $l$ can be
employed in a stochastic gradient method to maximize the ELBO. To
model the stochastic setting, suppose that $X=\pp{\x_{1},\cdots,\x_{N}}$
are iid and $p\pp{\z,X}=p\pp{\z}\prod_{n=1}^{N}p\pp{\x_{n}\vert\z}.$
Then, one may choose, e.g. $f_{n}\pp{\z}=\frac{1}{N}\log p\pp{\z}+\log p\pp{\x_{n}|\z}.$
The entropy $h$ is related to the (constant) entropy of the base
distribution as $h\pp{\w}=\mathrm{Entropy}\pp s+\log\vv C$.

The main bounds of this paper concern estimators for the gradient
of $l\pp{\w}$ alone, disregarding $h\pp{\w}.$ There are two reasons
for this. First, in location-scale families, the exact gradient of
$h\pp{\w}$ is known. Second, if one uses a stochastic estimator for
$h\pp{\w},$ this can be ``absorbed'' into $l\pp{\w}$ to some degree.
This is discussed further in \ref{sec:Discussion}.

\section{Variance Bounds}

\subsection{Technical Lemmas}

We begin with two technical lemmas which will do most of the work
in the main results. Both have (somewhat laborious) proofs in \ref{sec:Proofs}
(Appendix). The first lemma relates the norm of the \emph{parameter}
gradient of $f\pp{\T_{\b w}\pp{\u}}$ (with respect to $\w$) to the
norm of the gradient of $f\pp{\z}$ itself, evaluated at $\z=\T_{\w}\pp{\u}.$

\begin{restatable}{lem1}{gradnorm}\label{lem:gradnorm}For any $\w$
and $\u$, $\Verts{\nabla_{\b w}f\pp{\T_{\b w}\pp{\u}}}_{2}^{2}=\Verts{\nabla f\pp{\T_{\b w}\pp{\u}}}_{2}^{2}\pars{1+\Verts{\u}_{2}^{2}}.$
\end{restatable}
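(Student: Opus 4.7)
The plan is to compute $\nabla_{\w} f(\T_{\w}(\u))$ directly by the chain rule, splitting according to the two pieces of the parameter vector $\w = (\b m, C)$. Writing $\z = \T_{\w}(\u) = C\u + \b m$, the parameter vector decomposes into the location $\b m$ and the (vectorized) scale matrix $C$, so the squared Euclidean norm of $\nabla_{\w} f$ decomposes as
\[
\|\nabla_{\w} f(\T_{\w}(\u))\|_2^2 = \|\nabla_{\b m} f(\T_{\w}(\u))\|_2^2 + \|\nabla_C f(\T_{\w}(\u))\|_F^2,
\]
where the second term is the Frobenius norm of the matrix of partial derivatives with respect to the entries of $C$.

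Next I would apply the chain rule to each block. Since $\partial z_i / \partial m_j = \delta_{ij}$, the location gradient is simply $\nabla_{\b m} f(\T_{\w}(\u)) = \nabla f(\z)$. Since $\partial z_i / \partial C_{jk} = \delta_{ij}\, u_k$, the scale gradient is the rank-one outer product $\nabla_C f(\T_{\w}(\u)) = \nabla f(\z)\, \u^{\top}$, whose $(j,k)$-entry is $(\partial_j f)(\z)\, u_k$.

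Finally I would evaluate the two norms. The location contribution is $\|\nabla f(\z)\|_2^2$. For the scale contribution, the Frobenius norm of the rank-one outer product gives $\|\nabla f(\z)\, \u^{\top}\|_F^2 = \sum_{j,k} (\partial_j f(\z))^2 u_k^2 = \|\nabla f(\z)\|_2^2\, \|\u\|_2^2$. Adding the two yields $\|\nabla f(\z)\|_2^2 (1 + \|\u\|_2^2)$, which is the claimed identity.

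There is no substantive obstacle here beyond bookkeeping: one only needs the convention that the ``gradient with respect to $\w$'' is the concatenation of $\nabla_{\b m}$ and the vectorization of $\nabla_C$, so that $\|\nabla_{\w} f\|_2^2$ really equals the Euclidean-squared location part plus the Frobenius-squared matrix part. This convention is already implicit in how $\w = (\b m, C)$ was introduced in \ref{sec:Setup}.
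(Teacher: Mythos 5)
Your proof is correct and takes essentially the same approach as the paper's: both amount to computing, via the chain rule, the partial derivatives of $\T_{\w}\pp{\u}$ with respect to each entry $\b m_{i}$ and $C_{ij}$ and summing squares. The paper merely packages this as a separate Gram-matrix lemma, $\nabla_{\w}\T_{\w}\pp{\u}^{\top}\nabla_{\w}\T_{\w}\pp{\u}=\pars{1+\Verts{\u}_{2}^{2}}I$, whereas you compute the two blocks $\nabla_{\b m}f=\nabla f\pp{\z}$ and $\nabla_{C}f=\nabla f\pp{\z}\,\u^{\top}$ and take the Euclidean and Frobenius norms directly --- the same calculation, differently arranged.
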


The proof is tedious but essentially amounts to calculating the derivative
with respect to each component of $\w$ (i.e. entries $\b m_{i}$
and $C_{ij}$), summing the square of all entries, and simplifying.
The second lemma gives a closed-form for the expectation of a closely
related expression that will appear in the proof of \ref{thm:batch_f_scalar_M}
as a consequence of applying \ref{lem:gradnorm}.

\begin{restatable}{lem1}{expectedtminuszstar}\label{lem:expectedtminuszstar}Let
$\ur\sim s$ for $s$ standardized with $\ur\in\R^{d}$ and $\E_{\ur\sim s}\ur_{i}^{4}=\kappa$.
Then for any $\zo,$
\[
\E\VV{\T_{\b w}\pp{\ur}-\zo}_{2}^{2}\pars{1+\VV{\ur}_{2}^{2}}=\pars{d+1}\VV{\b m-\zo}_{2}^{2}+\pars{d+\kappa}\VV C_{F}^{2}.
\]

\end{restatable}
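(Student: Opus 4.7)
The plan is to substitute the affine map $\T_\w\pp{\ur} = C\ur + \b m$ and set $\b t = \b m - \zo$ so that $\T_\w\pp{\ur} - \zo = C\ur + \b t$. I then expand the squared norm as $\VV{C\ur + \b t}_2^2 = \VV{C\ur}_2^2 + 2\b t^\top C\ur + \VV{\b t}_2^2$, multiply through by $\pars{1 + \VV{\ur}_2^2}$, and evaluate each of the six resulting expectations separately using the standardized moment conditions on $\ur$ (iid components with $\E\ur_1 = \E\ur_1^3 = 0$, $\V\ur_1 = 1$, and $\E\ur_1^4 = \kappa$).

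The two terms involving only $\b t$ combine to $\VV{\b t}_2^2\pars{1 + \E\VV{\ur}_2^2} = \pars{d+1}\VV{\b m - \zo}_2^2$, using $\E\VV{\ur}_2^2 = d$. The two cross terms $\E\bb{\b t^\top C\ur}$ and $\E\bb{\b t^\top C\ur \cdot \VV{\ur}_2^2}$ both vanish: the former by $\E\ur = 0$, the latter because each coordinate of $\E\bb{\ur \VV{\ur}_2^2}$ involves either $\E\ur_j^3 = 0$ or $\E\ur_j \cdot \E\ur_l^2 = 0$ by the zero first and third moments.

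The nontrivial work lies in the two quadratic-in-$C\ur$ terms. Writing $\VV{C\ur}_2^2 = \sum_{i,j,k} C_{ij} C_{ik} \ur_j \ur_k$, independence and unit variance immediately give $\E\VV{C\ur}_2^2 = \VV{C}_F^2$. For the fourth-order piece $\E\bb{\VV{C\ur}_2^2 \VV{\ur}_2^2} = \sum_{i,j,k,l} C_{ij} C_{ik} \E\bb{\ur_j \ur_k \ur_l^2}$, I classify which index patterns yield nonzero moments: the vanishing odd moments force $j = k$, and then $\E\bb{\ur_j^2 \ur_l^2}$ equals $\kappa$ when $l = j$ and $1$ otherwise, contributing $\kappa + \pp{d - 1}$ for each fixed $j$. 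This gives $\pars{\kappa + d - 1}\VV{C}_F^2$, which combined with the lower-order $\VV{C}_F^2$ yields $\pars{d + \kappa}\VV{C}_F^2$, matching the claim.

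The only real obstacle is the careful index bookkeeping in the fourth-moment sum, ensuring every odd-power factor is eliminated by the first- and third-moment conditions and that the surviving even-moment contributions are collected correctly. Everything else is direct expansion under the standardization hypotheses.
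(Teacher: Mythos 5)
Your proof is correct and takes essentially the same route as the paper's: substitute $\T_{\b w}\pp{\ur}=C\ur+\b m$, expand into the constant, cross, and quadratic terms times $\pars{1+\Verts{\ur}_{2}^{2}}$, kill the cross terms using the zero first and third moments, and evaluate the fourth-order term via the case analysis that forces $j=k$ and splits $l=j$ (contributing $\kappa$) from $l\neq j$ (contributing $d-1$), giving $\pars{d+\kappa}\Verts C_{F}^{2}$. The only difference is organizational: the paper packages the moment calculations into a separate lemma with matrix identities such as $\E\ur\ur^{\top}\ur\ur^{\top}=\pars{d-1+\kappa}I$ and trace manipulations, whereas you do the equivalent scalar index bookkeeping inline.
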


Again, the proof is tedious but based on simple ideas: Substitute
the definition of $\T_{\w}$ into the left-hand side and expand all
terms. This gives terms between zeroth and fourth order (in $\ur$).
Calculating the exact expectation of each and simplifying using the
assumption that $s$ is standardized gives the result.

\subsection{Basic Variance Bound\label{sec:Variance-batch}}

Given these two lemmas, we give our major technical result, bounding
the variability of a reparameterization-based gradient estimator.
This will be later be extended to consider data subsampling, and a
generalized notion of smoothness. Note that we do \emph{not} require
that $f$ be convex.

\begin{restatable}{thm1}{batchfscalarM}Suppose $f$ is $M$-smooth,
$\zo$ is a stationary point of $f$, and $s$ is standardized with
$\ur\in\R^{d}$ and $\E\ur_{i}^{4}=\kappa$. Let $\gr=\nabla_{\b w}f\pars{\T_{\b w}\pp{\ur}}$
for $\ur\sim s$. Then,\emph{\label{thm:batch_f_scalar_M}}
\begin{equation}
\E\Verts{\gr}_{2}^{2}\leq M^{2}\pars{\pp{d+1}\Verts{\b m-\zo}_{2}^{2}+\pp{d+\kappa}\Verts C_{F}^{2}}.\label{eq:thm1_result}
\end{equation}
Moreover, this result is unimprovable without further assumptions.\end{restatable}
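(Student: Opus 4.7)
The plan is to chain the two technical lemmas together, using the smoothness assumption to convert the gradient of $f$ at $\T_{\b w}(\ur)$ into a quantity involving $\|\T_{\b w}(\ur)-\zo\|_{2}^{2}$.

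First I would apply \ref{lem:gradnorm} pointwise to write
\[
\E\Verts{\gr}_{2}^{2} \;=\; \E\bracs{\Verts{\nabla f(\T_{\b w}(\ur))}_{2}^{2}\pars{1+\Verts{\ur}_{2}^{2}}}.
\]
Next, I would use $M$-smoothness together with the stationarity of $\zo$. Since $\nabla f(\zo)=0$, smoothness gives $\Verts{\nabla f(\z)}_{2}=\Verts{\nabla f(\z)-\nabla f(\zo)}_{2}\leq M\Verts{\z-\zo}_{2}$ for every $\z$, and squaring yields $\Verts{\nabla f(\T_{\b w}(\ur))}_{2}^{2}\leq M^{2}\Verts{\T_{\b w}(\ur)-\zo}_{2}^{2}$. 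Substituting this into the expectation and pulling the constant $M^{2}$ out produces
\[
\E\Verts{\gr}_{2}^{2} \;\leq\; M^{2}\,\E\bracs{\Verts{\T_{\b w}(\ur)-\zo}_{2}^{2}\pars{1+\Verts{\ur}_{2}^{2}}}.
\]
Finally, I would apply \ref{lem:expectedtminuszstar} to the right-hand side, which immediately gives the claimed bound $M^{2}\pars{(d+1)\Verts{\b m-\zo}_{2}^{2}+(d+\kappa)\Verts{C}_{F}^{2}}$.

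For the unimprovability claim, I would exhibit a single $f$ that saturates every inequality used. The natural choice is the quadratic $f(\z)=\tfrac{M}{2}\Verts{\z-\zo}_{2}^{2}$, which is exactly $M$-smooth and has $\zo$ as its unique stationary point. For this $f$, $\nabla f(\z)=M(\z-\zo)$, so the smoothness step $\Verts{\nabla f(\z)}_{2}\leq M\Verts{\z-\zo}_{2}$ holds with equality for every $\z$, and hence the inequality above is an equality. \ref{lem:expectedtminuszstar} is itself an equality, so the overall bound is attained, showing no smaller constant on the right-hand side is possible under the stated assumptions.

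I do not anticipate a real obstacle: the two lemmas carry essentially all the analytic content, and the smoothness-plus-stationary-point trick to convert $\Verts{\nabla f}$ into $\Verts{\z-\zo}$ is standard. The only thing worth being careful about is the direction of the reduction for tightness, specifically that the witnessing $f$ must simultaneously saturate the smoothness inequality for \emph{every} $\z$ in the support of $\T_{\b w}(\ur)$, which the quadratic does by design.
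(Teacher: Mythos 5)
Your proposal is correct and follows essentially the same route as the paper's own proof: apply \ref{lem:gradnorm}, insert $\nabla f(\zo)=0$ to invoke $M$-smoothness, finish with \ref{lem:expectedtminuszstar}, and witness tightness with the quadratic $f(\z)=\tfrac{M}{2}\Verts{\z-\zo}_{2}^{2}$. Your added remark that the witness must saturate the smoothness inequality at \emph{every} point in the support is a correct and slightly more careful articulation of the paper's tightness argument.
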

\begin{proof}
We expand the definition of $\gr$, and use the above lemmas and the
smoothness of $f.$

\begin{align*}
\ \ \ \ \ \ \ \ \ \ \ \ \E\Verts{\gr}_{2}^{2} & =\E\Verts{\nabla_{\b w}f\pp{\T_{\b w}\pp{\ur}}}_{2}^{2} & \text{ (Definition of \ensuremath{\gr})}\\
 & =\E\Verts{\nabla f\pp{\T_{\b w}\pp{\ur}}}_{2}^{2}\pp{1+\Verts{\ur}_{2}^{2}} & \text{ (\ref{lem:gradnorm})}\\
 & =\E\Verts{\nabla f\pp{\T_{\b w}\pp{\ur}}-\nabla f\pp{\zo}}_{2}^{2}\pp{1+\Verts{\ur}_{2}^{2}} & \text{ (\ensuremath{\nabla f\pp{\zo}=0})}\\
 & \leq\E M^{2}\Verts{\T_{\b w}\pp{\ur}-\zo}_{2}^{2}\pp{1+\Verts{\ur}_{2}^{2}} & \text{ (\ensuremath{f} is smooth)}\\
 & =M^{2}\pars{\pars{d+1}\Verts{\b m-\zo}_{2}^{2}+\pars{d+\kappa}\Verts C_{F}^{2}}. & \text{ (\ref{lem:expectedtminuszstar})}
\end{align*}

To see that this is unimprovable without further assumptions, observe
that the only inequality is using the smoothness on $f$ to bound
the norm of the difference of gradients at $\T_{\b w}\pp u$ and at
$\zo$. But for $f\pp{\z}=\frac{M}{2}\Verts{\z-\zo}_{2}^{2}$ this
inequality is tight. Thus, for any $M$ and $\zo$, there is a function
$f$ satisfying the assumptions of the theorem such that \ref{eq:thm1_result}
is an equality.
\end{proof}
With a small amount of additional looseness, we can cast \ref{eq:thm1_result}
into a more intuitive form. Define $\wo=\pp{\zo,0_{d\times d}}$,
where $0_{d\times d}$ is a $d\times d$ matrix of zeros. Then, $\VV{\w-\bar{\w}}_{2}^{2}=\VV{\b m-\bar{\z}}_{2}^{2}+\VV C_{F}^{2}$,
so we can slightly relax \ref{eq:thm1_result} to the more user-friendly
form of
\begin{equation}
\E\Verts{\gr}_{2}^{2}\leq\pp{d+\kappa}M^{2}\Verts{\w-\wo}_{2}^{2}.\label{eq:thm1_user_friendly}
\end{equation}
The only additional looseness is bounding $d+1\leq d+\kappa$. This
is justified since when $s$ is standardized, $\kappa=\ur_{i}^{4}$
is the kurtosis, which is at least one. Here, $\kappa$ is determined
by $s$ and does not depend on the dimensionality. For example, if
$s$ is Gaussian, $\kappa=3$. Thus, \ref{eq:thm1_user_friendly}
will typically not be much looser than \ref{eq:thm1_result}.

Intuitively, $\wo$ are parameters that concentrate $q$ entirely
at a stationary point of $f$. It is not hard to show that $\VV{\w-\wo}^{2}=\E_{\zr\sim q_{\w}}\VV{\zr-\zo}^{2}.$
Thus, \ref{eq:thm1_user_friendly} intuitively says that $\E\VV{\gr}^{2}$
is bounded in terms of how far far the average point sampled from
$q_{\w}$ is from $\zo$. Since $f$ need not be convex, there might
be multiple stationary points. In this case, \ref{thm:batch_f_scalar_M}
holds simultaneously for all of them.

\subsection{Generalized Smoothness}

Since the above bound is not improvable, tightening it requires stronger
assumptions. The tightness of \ref{thm:batch_f_scalar_M} is determined
by the smoothness condition that the difference of gradients at two
points is bounded as $\Verts{\nabla f\pp y-\nabla f\pp z}_{2}\leq M\Verts{y-z}_{2}$.
For some problems, $f$ may be much smoother in \emph{certain directions}
than others. In such cases, the smoothness constant $M$ will need
to reflect the worst-case direction. To produce a tighter bound for
such situations, we generalize the notion of smoothness to allow $M$
to be a symmetric matrix. 
\begin{defn}
$f$ is $M$-\textbf{matrix-smooth} if $\Verts{\nabla f\pp{\b y}-\nabla f\pp{\z}}_{2}\leq\Verts{M\pp{\b y-\z}}_{2}$
(for symmetric $M$).
\end{defn}

We can generalize the result in \ref{thm:batch_f_scalar_M} to functions
with this matrix-smoothness condition. The proof is very similar.
The main difference is that after applying the smoothness condition,
the matrix $M$ needs to be ``absorbed'' into the parameters $\w=\pp{\b m,C}$
before applying \ref{lem:expectedtminuszstar}.

\begin{restatable}{thm1}{batchfmatrixM}Suppose $f$ is $M$-matrix
smooth, $\zo$ is a stationary point of $f$, and $s$ is standardized
with $\ur\in\R^{d}$ and $\E\ur_{i}^{4}=\kappa$. Let $\gr=\nabla_{\b w}f\pars{\T_{\b w}\pp{\ur}}$
for $\ur\sim s$. Then,\label{thm:batch_f_matrix_M}
\begin{equation}
\E\Verts{\gr}_{2}^{2}\leq\pars{d+1}\Verts{M\pp{\b m-\zo}}_{2}^{2}+\pars{d+\kappa}\Verts{MC}_{F}^{2}.\label{eq:thm2_result}
\end{equation}
Moreover, this result is unimprovable without further assumptions.

\end{restatable}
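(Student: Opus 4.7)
The plan is to mimic the proof of \ref{thm:batch_f_scalar_M} almost verbatim, with one extra algebraic trick to let us reuse \ref{lem:expectedtminuszstar} after the smoothness inequality is applied. First, I would expand $\E\Verts{\gr}_{2}^{2}=\E\Verts{\nabla_{\b w}f\pp{\T_{\b w}\pp{\ur}}}_{2}^{2}$ and apply \ref{lem:gradnorm} to get $\E\Verts{\nabla f\pp{\T_{\b w}\pp{\ur}}}_{2}^{2}\pp{1+\Verts{\ur}_{2}^{2}}$. Because $\zo$ is stationary, I can replace $\nabla f\pp{\T_{\b w}\pp{\ur}}$ by $\nabla f\pp{\T_{\b w}\pp{\ur}}-\nabla f\pp{\zo}$ for free, and then invoke matrix-smoothness to bound this by $\E\Verts{M\pp{\T_{\b w}\pp{\ur}-\zo}}_{2}^{2}\pp{1+\Verts{\ur}_{2}^{2}}$.

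The key step, and the only real difference from the scalar case, is to recognise that
\[
M\pp{\T_{\b w}\pp{\ur}-\zo}=M\pp{C\ur+\b m-\zo}=\pp{MC}\ur+M\pp{\b m-\zo}=\T_{\w'}\pp{\ur}-M\zo,
\]
where $\w'=\pp{M\b m,MC}$. In other words, multiplying by $M$ absorbs into a new affine reparameterization with location $M\b m$ and scale matrix $MC$, and the ``target point'' moves from $\zo$ to $M\zo$. Applying \ref{lem:expectedtminuszstar} with these substitutions then yields
\[
\E\Verts{\T_{\w'}\pp{\ur}-M\zo}_{2}^{2}\pp{1+\Verts{\ur}_{2}^{2}}=\pp{d+1}\Verts{M\b m-M\zo}_{2}^{2}+\pp{d+\kappa}\Verts{MC}_{F}^{2},
\]
which is exactly the right-hand side of \ref{eq:thm2_result} after pulling $M$ out of the first norm.

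For unimprovability, I would exhibit a concrete worst case, namely $f\pp{\z}=\tfrac{1}{2}\pp{\z-\zo}^{\top}M\pp{\z-\zo}$. Then $\nabla f\pp{\z}=M\pp{\z-\zo}$, so $\zo$ is stationary and the matrix-smoothness inequality $\Verts{\nabla f\pp{\b y}-\nabla f\pp{\z}}_{2}\leq\Verts{M\pp{\b y-\z}}_{2}$ holds with equality for every $\b y,\z$. Since this was the only inequality used in the derivation above, \ref{eq:thm2_result} is attained with equality for this $f$, establishing that the bound cannot be tightened without additional assumptions.

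The main obstacle I anticipate is purely notational: making the ``absorb $M$ into $\w$'' substitution clean enough that \ref{lem:expectedtminuszstar} applies verbatim. Everything else is an exact parallel of \ref{thm:batch_f_scalar_M}, and no new integration or moment calculation is needed, since the kurtosis dependence and the dimension factors $d+1$ and $d+\kappa$ are already packaged inside \ref{lem:expectedtminuszstar}.
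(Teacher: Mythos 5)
Your proposal is correct and takes essentially the same route as the paper's own proof: the identical chain of \ref{lem:gradnorm}, stationarity of $\zo$, and the matrix-smoothness inequality, followed by absorbing $M$ into new parameters $\pp{M\b m,MC}$ so that \ref{lem:expectedtminuszstar} applies verbatim, and the same extremal quadratic $f\pp{\z}=\frac{1}{2}\pp{\z-\zo}^{\top}M\pp{\z-\zo}$ for unimprovability. If anything, your intermediate expression $\T_{\w'}\pp{\ur}-M\zo$ is the cleanly stated version of the paper's corresponding (slightly miswritten) line $\T_{\b v}\pp{\ur}-M\pp{\zo-\b m}$.
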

\begin{proof}
The proof closely mirrors that of \ref{thm:batch_f_scalar_M}. Here,
given $\w=\pp{\b m,C},$ we define $\v=\pp{M\b m,MC},$ to be $\w$
with $M$ ``absorbed'' into the parameters.
\begin{alignat*}{2}
\E\Verts{\gr}_{2}^{2} & =\E\Verts{\nabla_{\b w}f\pp{\T_{\b w}\pp{\ur}}}_{2}^{2} & \text{ Definition of \ensuremath{\gr})}\\
 & =\E\Verts{\nabla f\pp{\T_{\b w}\pp{\ur}}}_{2}^{2}\pp{1+\Verts{\ur}_{2}^{2}} & \text{ (\ref{lem:gradnorm})}\\
 & =\E\Verts{\nabla f\pp{\T_{\b w}\pp{\ur}}-\nabla f\pp{\zo}}_{2}^{2}\pp{1+\Verts{\ur}_{2}^{2}} & \text{ (\ensuremath{\nabla f\pp{\zo}=0})}\\
 & \leq\E\Verts{M\pars{\T_{\b w}\pp{\ur}-\zo}}_{2}^{2}\pp{1+\Verts{\ur}_{2}^{2}} & \text{ (\ensuremath{f} is smooth)}\\
 & =\E\Verts{\T_{\b v}\pp{\ur}-M\pp{\zo-\b m}}_{2}^{2}\pp{1+\Verts{\ur}_{2}^{2}} & \text{(Absorb \ensuremath{M} into \ensuremath{\v})}\\
 & =\pars{d+1}\Verts{M\b m-M\zo}_{2}^{2}+\pars{d+\kappa}\Verts{MC}_{F}^{2}. & \text{ (\ref{lem:expectedtminuszstar})}
\end{alignat*}
To see that this is unimprovable, observe that the only inequality
is the matrix-smoothness condition on $f$. But for $f\pp{\z}=\frac{1}{2}\pp{\z-\zo}^{\top}M\pp{\z-\zo},$
the difference of gradients $\VV{\nabla f\pp{\b y}-\nabla f\pp{\z}}_{2}=\VV{M\pp{\b y-\b z}}_{2}$
is an equality. Thus, for any $M$ and $\zo$, there is an $f$ satisfying
the assumptions of the theorem such that the bound in \ref{eq:thm2_result}
is an equality.
\end{proof}
It's easy to see that this reduces to \ref{thm:batch_f_scalar_M}
in the case that $f$ is smooth in the standard sense-- this corresponds
to the situation where $M$ is some constant times the identity. Alternatively,
one can simply observe that the two results are the same if $M$ is
a scalar. Thus, going forward we will use \ref{eq:thm2_result} to
represent the result with either type of smoothness assumption on
$f.$

\subsection{Subsampling}

Often, the function $f\pp{\z}$ takes the form of a sum over other
functions $f_{n}\pp{\z}$, typically representing different data.
Write this as
\[
f\pp{\z}=\sum_{n=1}^{N}f_{n}\pp{\z}.
\]

To estimate the gradient of $\E_{\ur\sim s}f\pp{\T_{\w}\pp{\ur}}$,
one can save time by using ``subsampling'': That is, draw a random
$n$, and then estimate the gradient of $\E_{\ur\sim s}f_{n}\pp{\T_{\w}\pp{\ur}}$.
The following result bounds this procedure. It essentially just takes
a set of estimators, one corresponding to each function $f_{n}$,
bounds their expected squared norm using the previous theorems, and
then combines these.

\begin{restatable}{thm1}{g2stoch}Suppose $f_{n}$ is $M_{n}$-matrix-smooth,
$\zo_{n}$ is a stationary point of $f_{n}$, and $s$ is standardized
with $\ur\in\R^{d}$ and $\E\ur_{i}^{4}=\kappa$. Let $\r g=\frac{1}{\pi\pp{\r n}}\nabla f_{\r n}\pp{\T_{\b w}\pp{\ur}}$
for $\ur\sim s$ and $\r n\sim\pi$ independent. Then,\label{thm:g2-stoch}

\begin{equation}
\E\Verts{\r g}_{2}^{2}\leq\sum_{n=1}^{N}\frac{1}{\pi\pp n}\pars{\pars{d+1}\Verts{M_{n}\pp{\b m-\zo_{n}}}_{2}^{2}+\pars{d+\kappa}\Verts{M_{n}C}_{F}^{2}}.\label{eq:g2_stoch}
\end{equation}
Moreover, this result is unimprovable without further assumptions.\end{restatable}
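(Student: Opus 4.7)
The plan is to condition on the sampled index $\r n$ and reduce the problem to the deterministic bound already established in \ref{thm:batch_f_matrix_M}. Since $\ur$ and $\r n$ are independent by hypothesis, the tower property gives
\[
\E\Verts{\r g}_{2}^{2} = \sum_{n=1}^{N}\pi\pp n\cdot\frac{1}{\pi\pp n^{2}}\,\E_{\ur}\Verts{\nabla_{\b w}f_{n}\pp{\T_{\b w}\pp{\ur}}}_{2}^{2} = \sum_{n=1}^{N}\frac{1}{\pi\pp n}\,\E_{\ur}\Verts{\nabla_{\b w}f_{n}\pp{\T_{\b w}\pp{\ur}}}_{2}^{2}.
\]
Each $f_{n}$ is $M_{n}$-matrix-smooth with stationary point $\zo_{n}$, so \ref{thm:batch_f_matrix_M} applied to $f_{n}$ upper-bounds the inner expectation by $\pars{d+1}\Verts{M_{n}\pp{\b m-\zo_{n}}}_{2}^{2}+\pars{d+\kappa}\Verts{M_{n}C}_{F}^{2}$. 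Substituting term-by-term into the sum yields exactly \ref{eq:g2_stoch}.

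For unimprovability, the only inequality in the whole argument lives inside the per-index appeal to \ref{thm:batch_f_matrix_M}, which is already known to be tight on quadratic targets. Choosing each $f_{n}\pp{\z}=\tfrac{1}{2}\pars{\z-\zo_{n}}^{\top}M_{n}\pars{\z-\zo_{n}}$ makes every one of the $N$ per-index inequalities an equality simultaneously, so the weighted sum is attained and no strictly tighter bound can hold under the stated assumptions on the $f_{n}$ alone.

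The main ``obstacle'' is really just bookkeeping: verifying that the independence of $\ur$ and $\r n$ lets one factor the expectation and pull the $1/\pi\pp n^{2}$ prefactor out before invoking \ref{thm:batch_f_matrix_M}, and confirming that the $M_{n}$-matrix-smoothness of $f_{n}$ transfers unchanged to the rescaled summand $\pi\pp n^{-1}f_{n}$ only through the $1/\pi\pp n$ prefactor on the norm-squared (since $\Verts{\nabla\pp{\pi\pp n^{-1}f_{n}}}_{2}^{2}=\pi\pp n^{-2}\Verts{\nabla f_{n}}_{2}^{2}$). No genuinely new analytic work is required; the statement is a $\pi$-weighted average of $N$ instances of \ref{thm:batch_f_matrix_M}, and this structure is exactly what will enable the later observation that non-uniform $\pi$ tightens the bound, since $\pi$ enters only through the prefactors $1/\pi\pp n$ and can be optimized separately from the per-$n$ smoothness terms.
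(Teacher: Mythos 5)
Your proposal is correct and takes essentially the same route as the paper: the paper packages your tower-property step as a small lemma (for $\r b=\r a_{\r n}/\pi\pp{\r n}$ with $\r n$ independent of the $\r a_{n}$, one has $\E\Verts{\r b}_{2}^{2}=\sum_{n}\E\Verts{\r a_{n}}_{2}^{2}/\pi\pp n$), applies it with $\r a_{n}=\nabla_{\b w}f_{n}\pars{\T_{\b w}\pp{\ur}}$, bounds each term via \ref{thm:batch_f_matrix_M}, and establishes unimprovability through the same quadratic construction you use. No substantive differences.
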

\begin{proof}
Consider a simple lemma: Suppose $\r a_{1}\cdots\r a_{N}$ are independent
random vectors and $\pi$ is any distribution over $1\cdots N.$ Let
$\r b=\r a_{\r n}/\pi\pp{\r n}$ for $\r n\sim\pi$, where $\r n$
is independent of $\r a_{n}.$ It is easy to show that $\E\r b=\sum_{n=1}^{N}\E\r a_{n}$
and $\E\Verts{\r b}_{2}^{2}=\sum_{n}\E\Verts{\r a_{n}}_{2}^{2}/\pi\pp n.$
The result follows from applying this with $\r a_{n}=\nabla_{\b w}f_{n}\pars{\T_{\b w}\pp{\ur}}$,
and then bounding $\E\Verts{\r a_{n}}_{2}^{2}$ using \ref{thm:batch_f_matrix_M}.

Again, in this result the only source of looseness is the use of the
smoothness bound for the component functions $f_{n}.$ Accordingly,
the result can be shown to be unimprovable: For any set of stationary
points $\zo$ and smoothness parameters $M_{n}$ we can construct
functions $f_{n}$ (as in \ref{thm:batch_f_matrix_M}) for which the
previous theorems are tight and thus this result is also tight.
\end{proof}
This result generalizes all the previous bounds: \ref{thm:batch_f_matrix_M}
is the special case when $N=1$, while \ref{thm:batch_f_scalar_M}
is the special-case when $N=1$ and $f_{1}$ is $M_{1}$-smooth (for
a scalar $M_{1})$. The case where $N>1$ but $f_{n}$ is $M_{n}$-smooth
(for scalar $M_{n}$) is also useful-- the bound in \ref{eq:g2_stoch}
remains valid, but with a scalar $M_{n}$.

\begin{figure}[t]
\scalebox{.95}{\includegraphics[viewport=0bp 25.2bp 252bp 144bp,clip,scale=0.59]{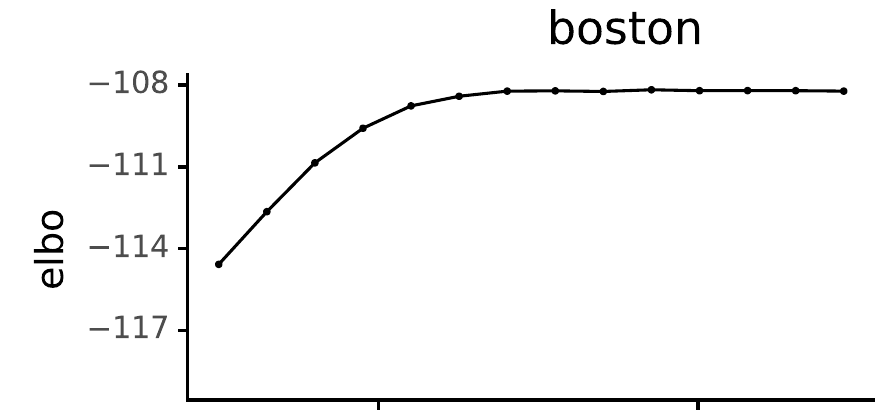}\hspace{-.1cm}\includegraphics[viewport=22.5bp 25.2bp 252bp 144bp,clip,scale=0.59]{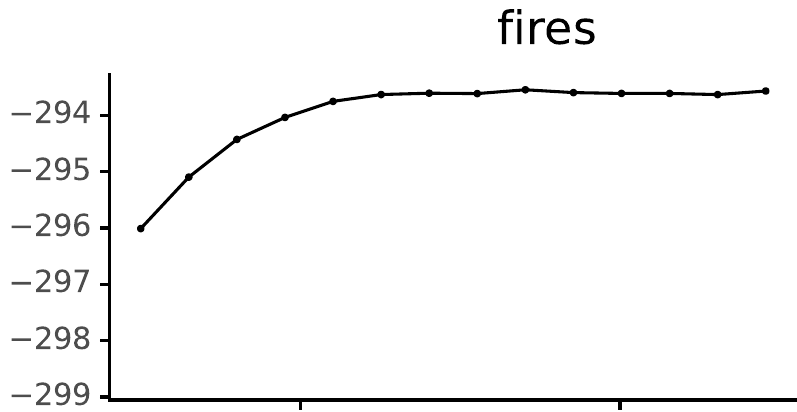}\hspace{-.1cm}\includegraphics[viewport=22.5bp 25.2bp 252bp 144bp,clip,scale=0.59]{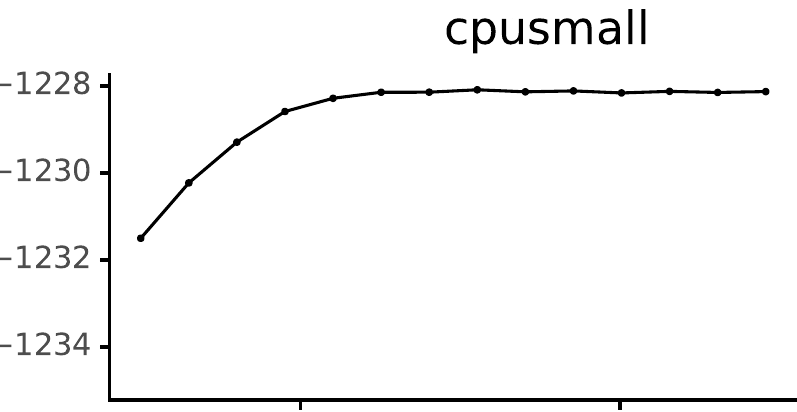}}

\scalebox{.95}{\includegraphics[viewport=0bp 25.2bp 252bp 126bp,clip,scale=0.59]{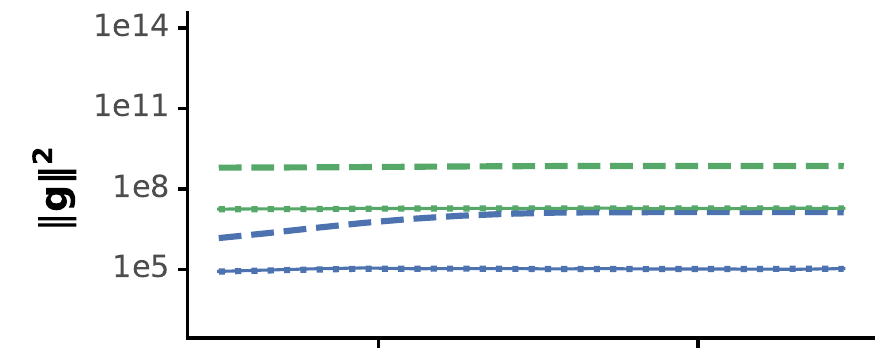}\hspace{-.1cm}\includegraphics[viewport=22.5bp 25.2bp 252bp 126bp,clip,scale=0.59]{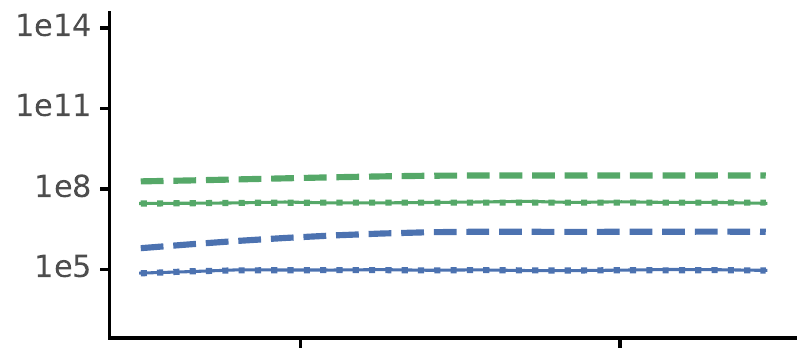}\hspace{-.1cm}\includegraphics[viewport=22.5bp 25.2bp 252bp 126bp,clip,scale=0.59]{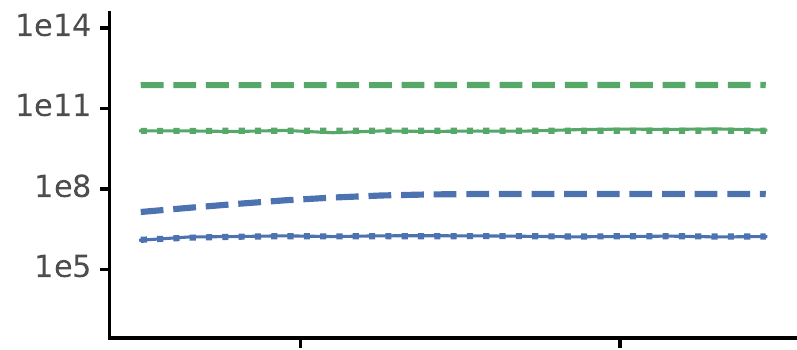}}\linebreak{}

\scalebox{.95}{\includegraphics[viewport=0bp 25.2bp 252bp 144bp,clip,scale=0.59]{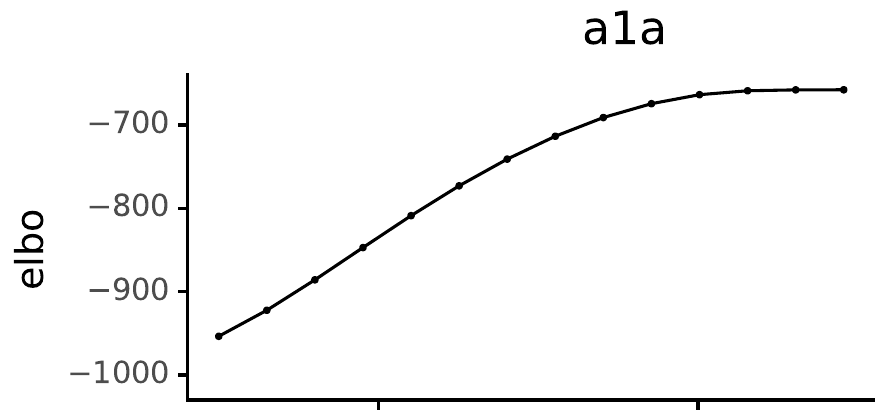}\hspace{-.1cm}\includegraphics[viewport=22.5bp 25.2bp 252bp 144bp,clip,scale=0.59]{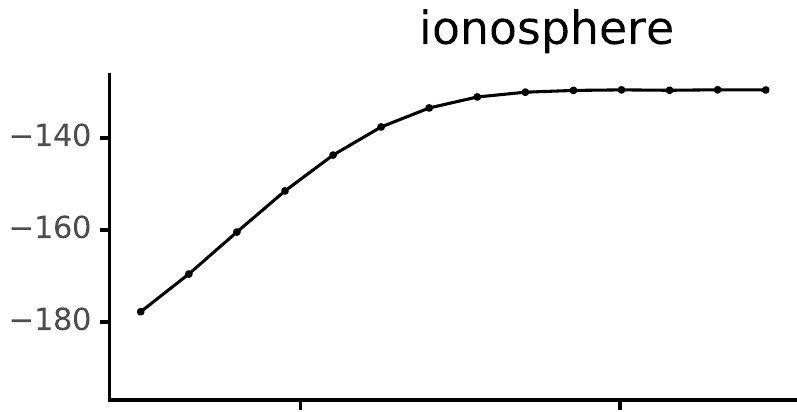}\hspace{-.1cm}\includegraphics[viewport=22.5bp 25.2bp 252bp 144bp,clip,scale=0.59]{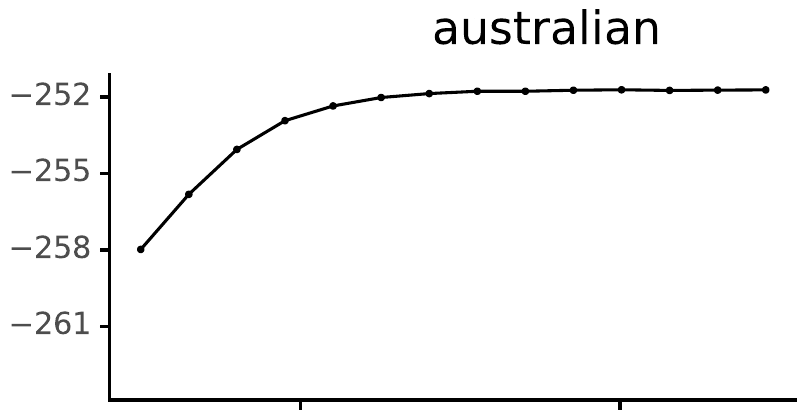}}

\scalebox{.95}{\includegraphics[viewport=0bp 25.2bp 252bp 126bp,clip,scale=0.59]{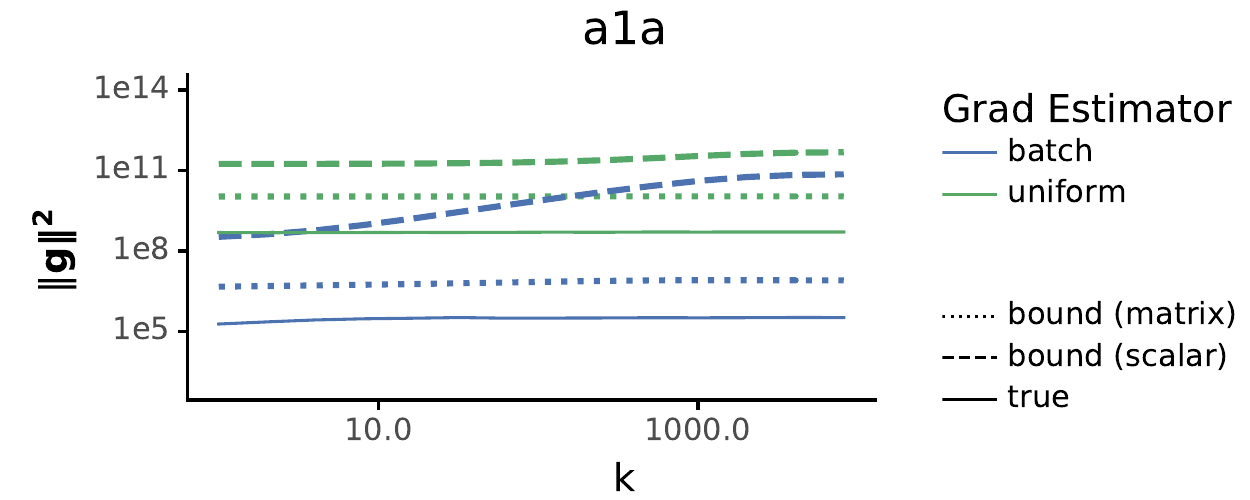}\hspace{-.1cm}\includegraphics[viewport=22.5bp 25.2bp 252bp 126bp,clip,scale=0.59]{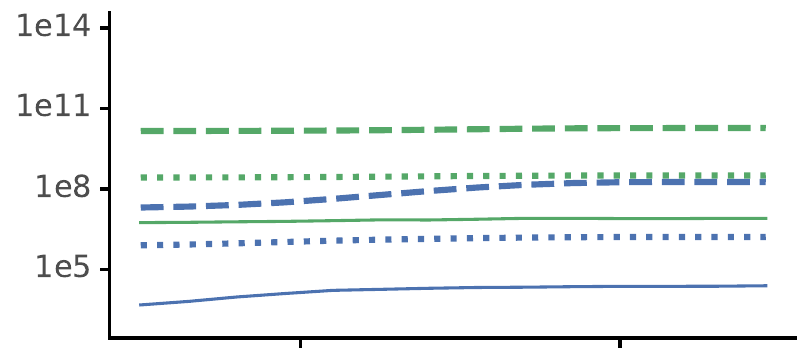}\hspace{-.1cm}\includegraphics[viewport=22.5bp 25.2bp 252bp 126bp,clip,scale=0.59]{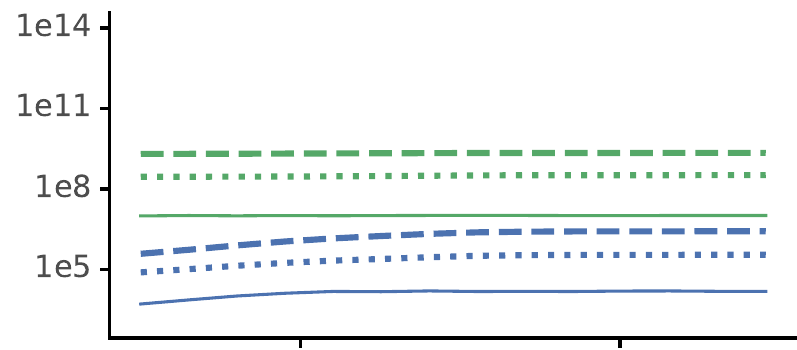}}\linebreak{}

\scalebox{.95}{\includegraphics[viewport=0bp 25.2bp 252bp 144bp,clip,scale=0.59]{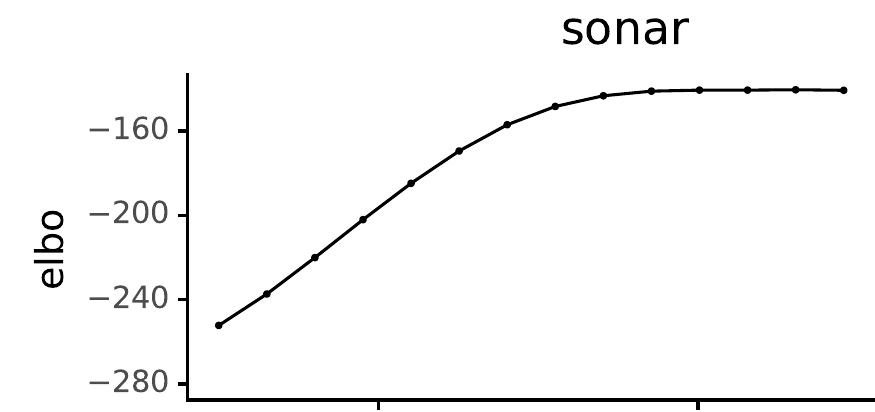}\hspace{-.1cm}\includegraphics[viewport=22.5bp 25.2bp 252bp 144bp,clip,scale=0.59]{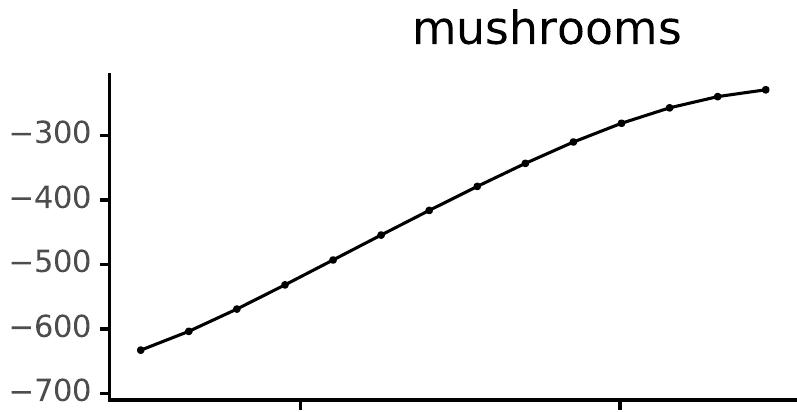}}

\scalebox{.95}{\includegraphics[viewport=0bp 0bp 252bp 126bp,clip,scale=0.59]{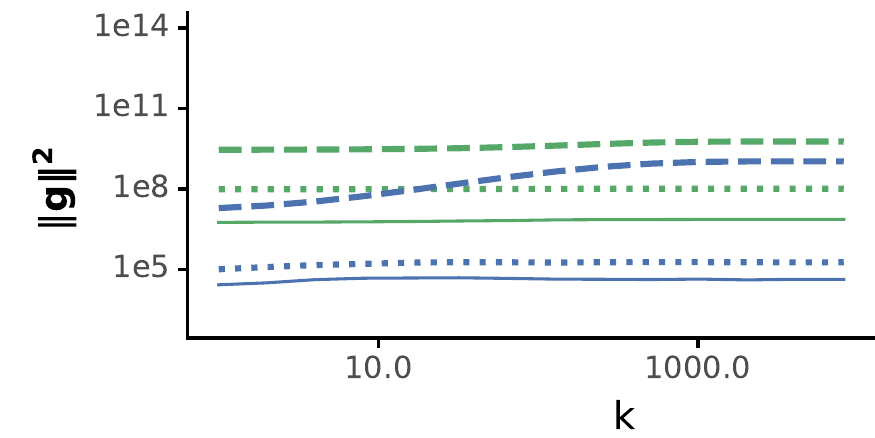}\hspace{-.1cm}\includegraphics[viewport=22.5bp 0bp 360bp 126bp,clip,scale=0.59]{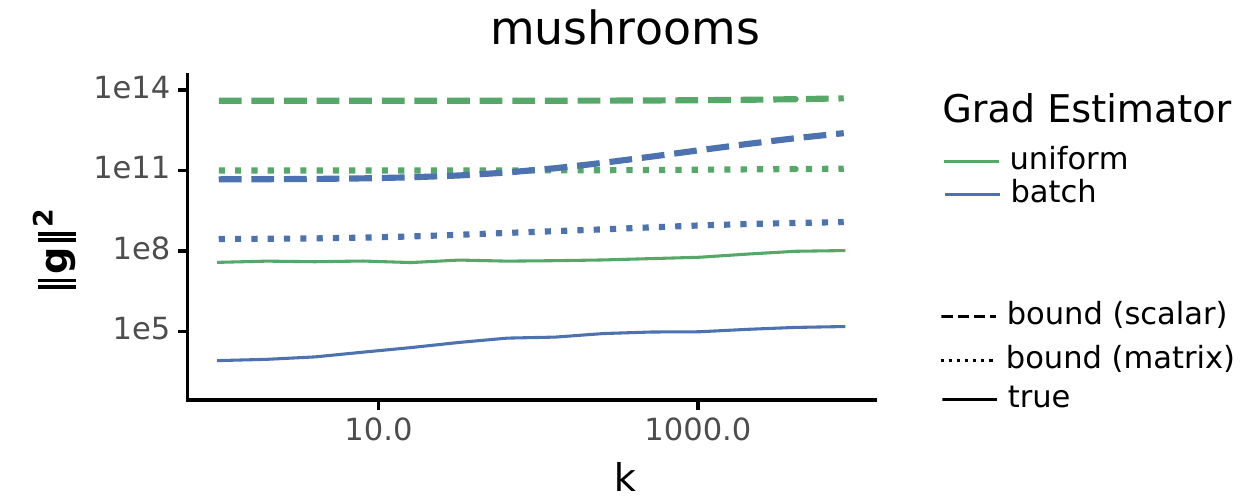}}

\caption{\textbf{How loose are the bounds compared to reality? }\emph{Odd Rows}:
Evolution of the ELBO during the single optimization trace used to
compare all estimators. \emph{Even Rows}: True and bounded variance
with gradients estimated in ``\texttt{batch}'' (using the full dataset
in each evaluation) and ``\texttt{uniform}'' (stochastically with
$\pi\protect\pp n=1/N$). The first two rows are for linear regression
models, while the rest are for logistic regression. \emph{Key Observations}:
(i) Batch estimation is lower-variance but higher cost (ii) variance
with stochastic estimation varies little over time (iii) using matrix
smoothness significantly tightens bounds -- and is exact for linear
regression models.\label{fig:traces}}
\end{figure}

\section{Empirical Evaluation}

\subsection{Model and Datasets}

We consider Bayesian linear regression and logistic regression models
on various datasets (\ref{tab:datasets}). Given data $\{\pp{\b x_{1},y_{1}},\cdots\pp{\b x_{N},y_{N}}\}$,
let $\b y$ be a vector of all $y_{n}$ and $X$ a matrix of all $\x_{n}.$
We assume a Gaussian prior so that $p\pp{\z,\b y|X}=\N\pp{\b z|0,\sigma^{2}I}\prod_{n=1}^{N}p\pp{y_{n}|\b z,\b x_{n}}.$
For linear regression, $p\pp{y_{n}|\b z,\b x_{n}}=\N\pp{y_{n}\vert\b z^{\top}\b x_{i},\rho^{2}}$,
while for logistic regression, $p\pp{y_{n}|\b z,\b x_{n}}=\mathrm{Sigmoid}\pp{y_{n}\b z^{\top}\b x_{n}}.$
For both models we use a prior of $\sigma^{2}=1.$ For linear regression,
we set $\rho^{2}=4.$

To justify the use of VI, apply the decomposition in \ref{eq:elbo-decomp}
substituting $p\pp{\z,\b y|X}$ in place of $p\pp{\z,\x}$ to get
that
\[
\log p\pp{\b y|X}=\E_{\zr\sim q}\log\frac{p\pp{\zr,\b y|X}}{q\pp{\zr}}+\KL{q\pp{\zr}}{p\pp{\zr|\b y,X}}.
\]

Thus, adjusting the parameters of $q$ to maximize the first term
on the right tightens a lower-bound on the conditional log likelihood
$\log p\pp{\b y|X}$ and minimizes the divergence from $q$ to the
posterior. So, we again take our goal as maximizing $l\pp{\w}+h\pp{\w}$.
In the batch setting, $f\pp{\z}=\log p\pp{\zr,\b y|X},$ while with
subsampling, $f_{n}\pp{\z}=\frac{1}{N}\log p\pp{\z}+\log p\pp{y_{n}\vert\z,\x_{n}}.$

\ref{sec:Smoothness-conditions-for-linear} shows that if $0\leq\phi''\pp t\leq\theta,$
then $\sum_{n=1}^{N}\phi\pp{\b a_{n}^{\top}\z+b_{n}}$ is $M$-matrix-smooth
for $M=\theta\sum_{i=1}^{N}\b a_{i}\b a_{i}^{\top}.$ Applying this\footnote{For linear regression, set $\phi\pp t=-t^{2}/\pp{2\rho^{2}}$, $\b a_{n}=\x_{n}$
and $b_{n}=y_{n}$ and observe that $\phi''=-1/\rho^{2}$. For logistic
regression, set $\phi\pp t=\log\mathrm{Sigmoid}\pp t$, $\b a_{n}=y_{n}\x_{n}$
and $b_{n}=0$ and observe that $\phi''\leq1/4$. Adding the prior
and using the triangle inequality gives the result.} gives that $f\pp{\z}$ and $f_{n}\pp{\z}$ are matrix-smooth for 

\[
M=\frac{1}{\sigma^{2}}I+c\sum_{n=1}^{N}\x_{n}\x_{n}^{\top},\text{ and}\ \ \ \ M_{n}=\frac{1}{N\sigma^{2}}I+c\ \x_{n}\x_{n}^{\top},
\]
\begin{wraptable}[11]{o}{0.45\columnwidth}%
\begin{centering}
\vspace{-.35cm}%
\begin{tabular}{cccc}
Dataset & Type & \# data & \# dims\tabularnewline
\hline 
boston & r & 506 & 13\tabularnewline
fires & r & 517 & 12\tabularnewline
cpusmall & r & 8192 & 13\tabularnewline
a1a & c & 1695 & 124\tabularnewline
ionosphere & c & 351 & 35\tabularnewline
australian & c & 690 & 15\tabularnewline
sonar & c & 208 & 61\tabularnewline
mushrooms & c & 8124 & 113\tabularnewline
\end{tabular}
\par\end{centering}
\caption{Regression (r) and classification (c) datasets\label{tab:datasets}}
\end{wraptable}%
where $c=1/\rho^{2}$ for linear regression, and $c=1/4$ for logistic
regression. Taking the spectral norm of these matrices gives scalar
smoothness constants. With subsampling, this is $\VV{M_{n}}_{2}=\frac{1}{\sigma^{2}N}+c\VV{\x_{n}}^{2}$.

\subsection{Evaluation of Bounds}

To enable a clear comparison of of different estimators and bounds,
we generate a single optimization trace of parameter vectors $\w$
for each dataset. All comparisons use this same trace. These use a
conservative optimization method: Find a maximum $\zo$ and then initialize
to $\b w=\pp{\zo,0}$. Then, optimization uses proximal stochastic
gradient descent (with the proximal operator reflecting $h$) with
a step size of $1/M$ (the scalar smoothness constant) and 1000 evaluations
for each gradient estimate.

\ref{fig:traces} shows the evolution of the ELBO along with the variance
of gradient estimation either in batch or stochastically with a uniform
distribution over data. For each iteration and estimator, we plot
the empirical $\VV{\gr}^{2}$ along with this paper's bounds using
either scalar or matrix smoothness.

\begin{figure}[t]
\includegraphics[viewport=0bp 28.35bp 256.5bp 144bp,clip,scale=0.65]{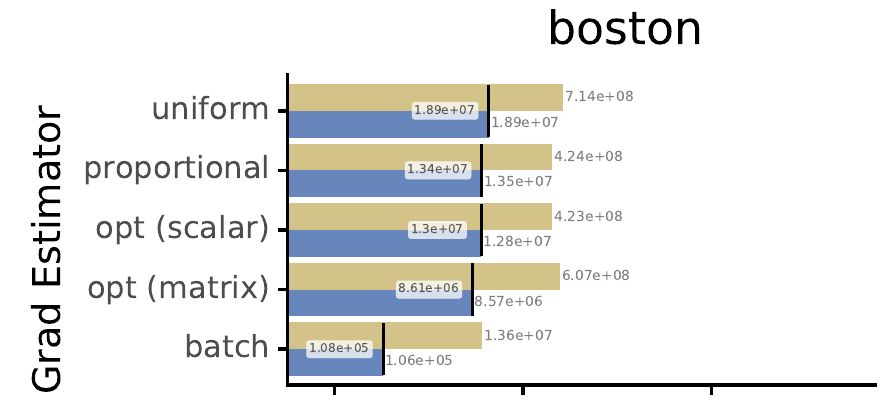}\includegraphics[viewport=81bp 28.35bp 256.5bp 144bp,clip,scale=0.65]{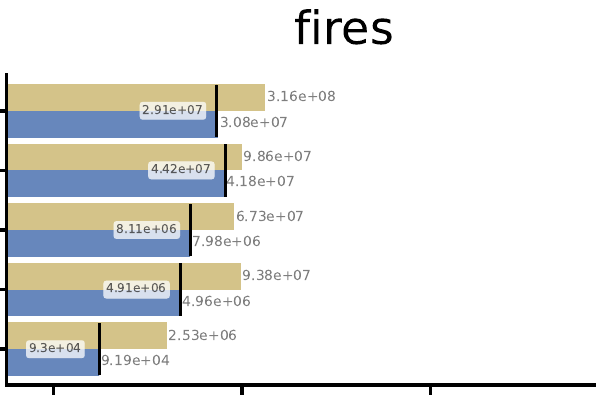}\includegraphics[viewport=81bp 28.35bp 256.5bp 144bp,clip,scale=0.65]{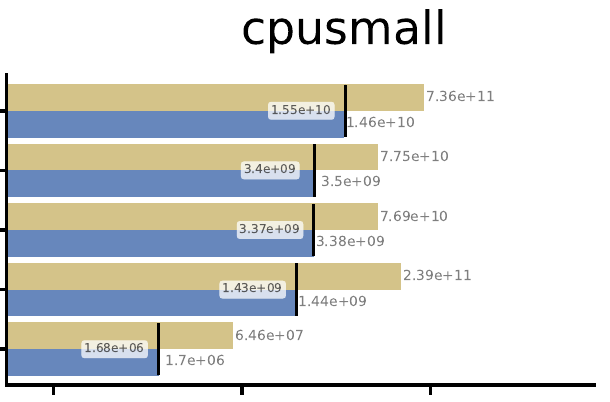}

\includegraphics[viewport=0bp 28.35bp 256.5bp 144bp,clip,scale=0.65]{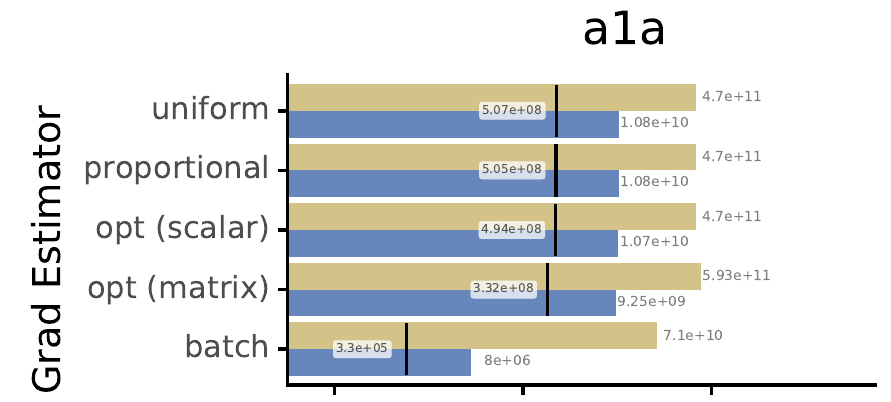}\includegraphics[viewport=81bp 28.35bp 256.5bp 144bp,clip,scale=0.65]{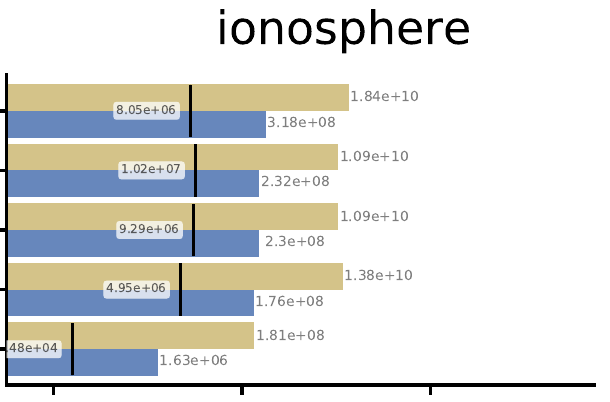}\includegraphics[viewport=81bp 28.35bp 256.5bp 144bp,clip,scale=0.65]{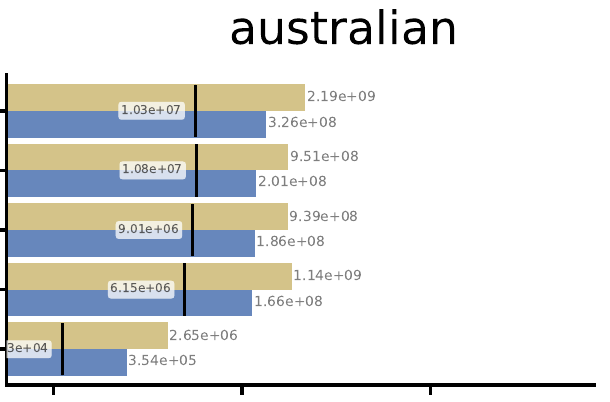}

\includegraphics[viewport=0bp 0bp 256.5bp 144bp,clip,scale=0.65]{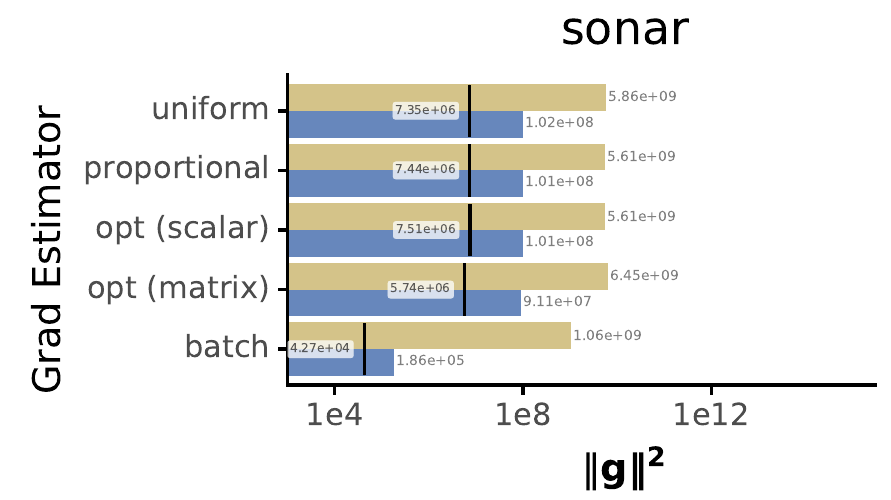}\includegraphics[viewport=81bp 0bp 360bp 144bp,clip,scale=0.65]{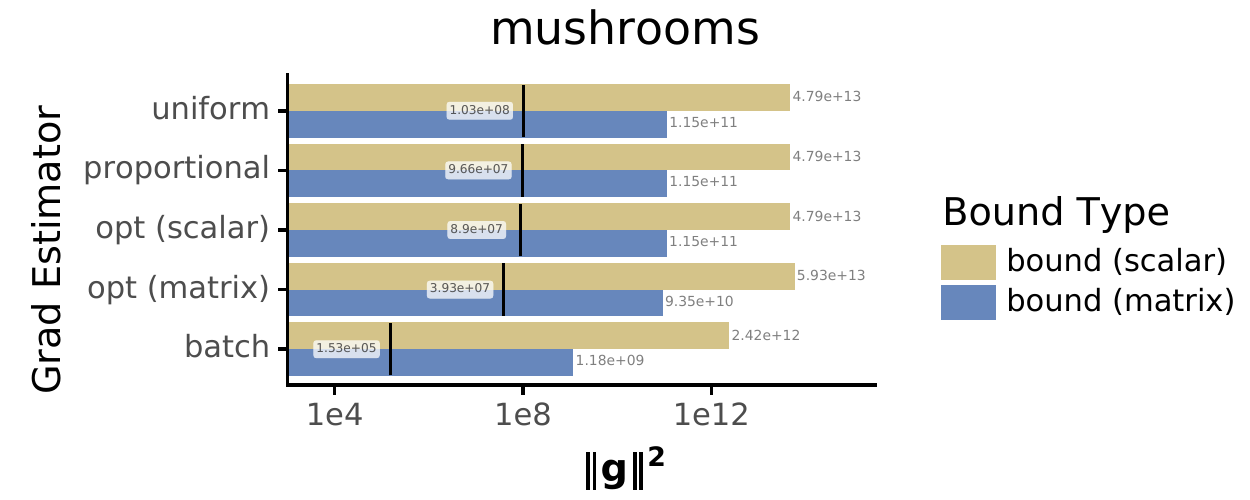}

\caption{\textbf{Tightening variance bounds reduces true variance.} A comparison
of the true (vertical bars) and bounded $\protect\E\protect\Verts{\protect\gr}^{2}$
values produced using five different gradient estimators. \texttt{Batch}
does not use subsampling. \texttt{Uniform} uses subsampling $\pi\protect\pp n=1/N$,
\texttt{proportional} uses $\pi\protect\pp n\propto M_{n}$, \texttt{opt
(scalar)} numerically optimizes $\pi\protect\pp n$ to tighten \ref{eq:g2_stoch}
with a scalar $M_{n}$ and \texttt{opt (matrix)} tightens \ref{eq:g2_stoch}
with a matrix $M_{n}$. For each sampling strategy, we show the variance
bound both with a scalar and matrix $M_{n}$. Uniform sampling has
true and bounded values of $\protect\E\protect\Verts{\protect\br g}^{2}$
ranging between 1.5x and 10x higher than those for sampling with $\pi$
numerically optimized.\label{fig:Tightening}}
\end{figure}

\subsection{Sampling distributions}

With subsampling, variability depends on the sampling distribution
$\pi$. We consider uniform sampling as well as three strategies that
attempt to tighten the bound in \ref{thm:g2-stoch}. In general, $\sum_{n}f\pp n^{2}/\pi\pp n$
is minimized over distributions $\pi$ by $\pi\pp n\propto\verts{f\pp n}$.
Thus, the tightest bound is given by

\begin{equation}
\pi_{\b w}^{*}\pp n\propto\sqrt{\pars{d+1}\Verts{M_{n}\pp{\b m-\zo_{n}}}_{2}^{2}+\pars{d+\kappa}\Verts{M_{n}C}_{F}^{2}}.\label{eq:pi_star}
\end{equation}

We call this ``opt (scalar)'' or ``opt (matrix)'' when $M_{n}$
is a scalar or matrix, respectively. We also consider a ``proportional''
heuristic with $\pi\pp n\propto M_{n}$ for a scalar $M_{n}.$ Sampling
from \ref{eq:pi_star} appears to require calculating the right-hand
side for each $n$ and then normalizing, which may not be practical
for large datasets. While there are obvious heuristics for recursively
approximating $\pi^{*}$ during an optimization, to maintain focus
we do not pursue these ideas here.

\ref{fig:Tightening} shows the empirical and true variance at the
final iteration of the optimization shown in \ref{fig:traces}. The
basic conclusion is that using a more careful sampling distribution
reduces both true and empirical variance.

\section{Discussion\label{sec:Discussion}}

\subsection{Related work\label{subsec:Related-work}}

\citet{Xu_2019_Variancereductionproperties} compute the variance
of a reparameterization estimator applied to a quadratic function,
when the variational distribution is a fully-factorized Gaussian.
This paper can be seen as extending this result to more general densities
(full-rank location-scale families) and more general target functions
(smooth functions).

\citet{Fan_2015_FastSecondOrderStochastic} give an abstract variance
bound for RP estimators. Essentially, they argue that if $\r g_{i}=\nabla_{w_{i}}f\pp{\T_{\w}\pp{\ur}}$
and $\nabla_{w_{i}}f\pp{\T_{\w}\pp{\u}}$ is $M$-smooth as a function
of $\u$, then $\V\bb{\r g_{i}}\leq M^{2}\pi^{2}/4$ when $\ur\sim\N(0,I).$
While this result is fairly abstract -- there is no proof that the
smoothness assumption holds for any particular $M$ with any particular
$f$ and $\T_{\w}$ -- it is similar in spirit to the results in
this paper.

\subsection{Variance vs Expected Squared Norms}

The above results are on the the expected squared norm (ESN) of the
gradient $\E\Vert\gr\Vert^{2}.$ Some stochastic gradient convergence
rates  instead consider (the trace of) the variance $\V\bb{\gr}$.
Since $\tr\V\bb{\gr}=\E\Vert\gr\Vert^{2}-\Vert\E\gr\Vert^{2}$, ESN
bounds are valid as variance bounds. Still, one can ask if these bounds
are loose. The following (proof in \ref{subsec:Variance-vs-Expected})
gives a lower-bound that shows that there is not much to gain from
a direct bound on the variance rather than just using the ESN bound
from \ref{thm:g2-stoch}.

\begin{restatable}{thm1}{varvesn}For any symmetric matrices $M_{1},\cdots,M_{N}$
and vectors $\bar{\z}_{1},\cdots,\bar{\z}_{N}$, there are functions
$f_{1},\cdots,f_{N}$ such that (1) $f_{n}$ is $M_{n}$-matrix-smooth
and has a stationary point at $\bar{\z}_{n}$ and (2) if $s$ is standardized
with $\ur\in\R^{d}$ and $\E\ur_{i}^{4}=\kappa$, then for $\r g=\frac{1}{\pi\pp{\r n}}\nabla f_{\r n}\pp{\T_{\b w}\pp{\ur}},$\label{thm:var-vs-esn}
\[
\tr\V\Vert\gr\Vert_{2}^{2}\geq\sum_{n=1}^{N}\frac{1}{\pi\pp n}\pars{d\Verts{M_{n}\pp{\b m-\zo_{n}}}_{2}^{2}+\pars{d+\kappa-1}\Verts{M_{n}C}_{F}^{2}}.
\]
\end{restatable}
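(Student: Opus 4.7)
The plan is to combine the variance--ESN identity $\tr\V\gr = \E\Verts{\gr}_2^2 - \Verts{\E\gr}_2^2$ with an explicit choice of witness functions that simultaneously saturates the ESN upper bound of \ref{thm:g2-stoch} and admits a Cauchy--Schwarz bound on $\Verts{\E\gr}_2^2$ that is tight enough to leave the claimed lower bound. The natural witnesses are the worst-case quadratics already used in the tightness argument of \ref{thm:batch_f_matrix_M}, namely $f_n(\z) = \tfrac{1}{2}(\z - \zo_n)^\top M_n (\z - \zo_n)$. Each such $f_n$ is $M_n$-matrix-smooth with stationary point $\zo_n$, and the matrix-smoothness step in the proof of \ref{thm:g2-stoch} becomes an equality for this choice, so
\[
\E\Verts{\gr}_2^2 = \sum_{n=1}^{N}\frac{1}{\pi\pp{n}}\pars{(d+1)\Verts{M_n(\b m - \zo_n)}_2^2 + (d+\kappa)\Verts{M_n C}_F^2}.
\]

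Next I would compute $\E\gr$ in closed form. Since $\r n$ and $\ur$ are independent, the $1/\pi\pp{\r n}$ factor cancels with the sampling probability, leaving $\E\gr = \sum_n \E_\ur \nabla_\w f_n(\T_\w(\ur))$. Differentiating $f_n(C\u + \b m)$ coordinatewise with respect to $\w=(\b m, C)$ gives $\nabla_{\b m} f_n = M_n(C\u + \b m - \zo_n)$ and $\nabla_C f_n = M_n(C\u + \b m - \zo_n)\u^\top$. Using $\E\ur = 0$ and $\E\ur\ur^\top = I$ from the standardized assumption, the $\b m$-block of $\E\gr$ is $\sum_n M_n(\b m - \zo_n)$ and the $C$-block is $\sum_n M_n C$, so
\[
\Verts{\E\gr}_2^2 = \Verts{\sum_n M_n(\b m - \zo_n)}_2^2 + \Verts{\sum_n M_n C}_F^2.
\]

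The key estimate is the Cauchy--Schwarz bound $\Verts{\sum_n \b a_n}^2 \leq \sum_n \Verts{\b a_n}^2/\pi\pp{n}$, which follows from the triangle inequality together with $\bigl(\sum_n \sqrt{\pi\pp{n}}\cdot \Verts{\b a_n}/\sqrt{\pi\pp{n}}\bigr)^2 \leq (\sum_n \pi\pp{n})(\sum_n \Verts{\b a_n}^2/\pi\pp{n})$ and $\sum_n \pi\pp{n}=1$. Applying it once to the $\b m$-block with $\b a_n = M_n(\b m - \zo_n)$ and once to the $C$-block with $\b a_n = M_n C$ in Frobenius norm yields
\[
\Verts{\E\gr}_2^2 \leq \sum_{n=1}^{N}\frac{1}{\pi\pp{n}}\pars{\Verts{M_n(\b m - \zo_n)}_2^2 + \Verts{M_n C}_F^2}.
\]
Subtracting this from the exact ESN above drops the coefficients $(d+1)$ to $d$ and $(d+\kappa)$ to $(d+\kappa-1)$, exactly matching the claimed lower bound on $\tr\V\gr$.

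The only thing to verify carefully is this arithmetic: the Cauchy--Schwarz slack contributes precisely ``one per term'' in each of the two blocks, matching the $+1$ offsets present in the ESN upper bound but absent from the variance lower bound. There is no deeper technical obstacle, since the heavy lifting (\ref{lem:gradnorm} and \ref{lem:expectedtminuszstar}) is already encapsulated in \ref{thm:g2-stoch}, and the use of quadratic witness functions just reduces the problem to routine first and second moment computations under the standardized base distribution.
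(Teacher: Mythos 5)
Your proposal is correct and follows essentially the same route as the paper: the same quadratic witnesses $f_n(\z)=\tfrac{1}{2}(\z-\zo_n)^{\top}M_n(\z-\zo_n)$ making the ESN bound of \ref{thm:g2-stoch} an equality, the same closed-form computation of $\E\gr$, and the same weighted bound $\Verts{\E\gr}_2^2\leq\sum_n\frac{1}{\pi\pp n}\pars{\Verts{M_n\pp{\b m-\zo_n}}_2^2+\Verts{M_nC}_F^2}$ (your Cauchy--Schwarz step is exactly the paper's Jensen step in disguise), followed by subtraction. The arithmetic you flag checks out: the slack contributes one unit per block, turning $d+1$ into $d$ and $d+\kappa$ into $d+\kappa-1$.
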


When $d\gg1$ this lower-bound is very close to the upper-bound on
$\E\Verts{\gr}^{2}$ in \ref{thm:g2-stoch}. Thus, under this paper's
assumptions, a variance bound cannot be significantly better than
an ESN bound.

\subsection{The Entropy Term\label{subsec:The-Entropy-Term}}

All discussion in this paper has been for gradient estimators for
$l,$ while the goal is of course to optimize $l+h.$ For location-scale
families, $h$ is known in closed-form, meaning the exact gradient
-- or the proximal operator for $h$ -- can be computed exactly.
Still, it has been observed that if $q_{\w}$ is very close to $p\pp{\z|\x},$
cancellations mean that estimating the gradient of $h+l$ might have
lower variance than the gradient of $l$ alone \citep{Roeder_2017_StickingLandingSimple}.

With any variational family, it is well-known that the gradient of
the entropy can be represented as $-\nabla_{\w}\E_{\zr\sim q_{\w}}\log q_{\v}\pp{\zr}\vert_{\v=\w}.$
That is, the dependence of $\log q_{\w}$ on $\w$ can be neglected
under differentiation. Thus, if one wishes to stochastically estimate
the gradient of $h$, one can treat $\log q_{\v}$ in the same way
as $\log p$ when calculating gradients. Then, one could apply the
analysis in this paper to $f\pp{\z}=\log p\pp{\z,\x}-\log q_{\v}\pp{\z}$
rather than $f\pp{\z}=\log p\pp{\z,\x}$ as done above. It is easy
to imagine situations where subtracting $\log q_{\v}$ (or a fraction
of it) from $\log p$ would change $M_{n}$ and $\bar{\z}_{n}$ in
such a way as to produce a tighter bound. Thus, the bounds in this
paper are consistent with practices \citep{Geffner_2018_UsingLargeEnsembles,Roeder_2017_StickingLandingSimple}
where using $\log q_{\v}$ as a control variate can reduce gradient
variance.

\subsection{Smoothness and Convergence Guarantees}

At a very high level, convergence rates for stochastic gradient methods
require both (1) control of the variability of the gradient estimator
and (2) either convexity or Lipschitz smoothness of the objective.
This paper is dedicated entirely to the first goal. Independent recent
work has addressed at the second issue \citep{Domke_2019_ProvableSmoothnessGuarantees}.
The basic summary is that if $f\pp{\z}$ is smooth, then $l\pp{\w}$
is smooth, and similarly if $f\pp{\z}$ is strongly convex. However,
full convergence guarantees for black-box VI remain an open research
problem.

\subsection{Prospects for Generalizing Bounds to Other Variational Families}

The bounds given in this paper are closely tied to location-scale
families: The exact form of the reparameterization function $\T_{w}$
is used in \ref{lem:gradnorm} and \ref{lem:expectedtminuszstar},
which underly the main results of \ref{thm:batch_f_scalar_M}, \ref{thm:batch_f_matrix_M},
and \ref{eq:thm2_result}. Thus, extending our proof strategy to other
variational families would require deriving new results analogous
to \ref{lem:gradnorm} and \ref{lem:expectedtminuszstar} for the
reparameterization function $\T_{w}$ corresponding to those new variational
families. Moreover, if the exact entropy is not available for a variational
family, the analysis must address the variance of the entropy gradient
estimator, as discussed in \ref{subsec:The-Entropy-Term}.

\subsection{Limitations}

This work has several limitations. First, it applies only to location-scale
families, and requires that the target objective be smooth. Second,
if $\log p$ is smooth, it may still be challenging in practice to
establish what the smoothness constant is. Third, we observed that
even with our strongest condition of matrix smoothness, the some looseness
remains in the bounds with the logistic regression examples. Since
the ESN bound is unimprovable, this looseness cannot be removed without
using more detailed structure of the target $\log p$. It is not obvious
what this structure would be, or how it would be obtained for practical
black-box inference problems.

\bibliographystyle{plainnat}
\bibliography{/Users/domke/Dropbox/Papers/Bibliography/justindomke_zotero_betterbibtex2}

\clearpage{}

\newpage{}

\section{Additional Experimental Results}

\begin{figure}[t]
\includegraphics[viewport=0bp 25.2bp 252bp 144bp,clip,scale=0.8]{\string"Variance Tests/boston_elbo\string".pdf}\includegraphics[viewport=0bp 25.2bp 252bp 144bp,clip,scale=0.8]{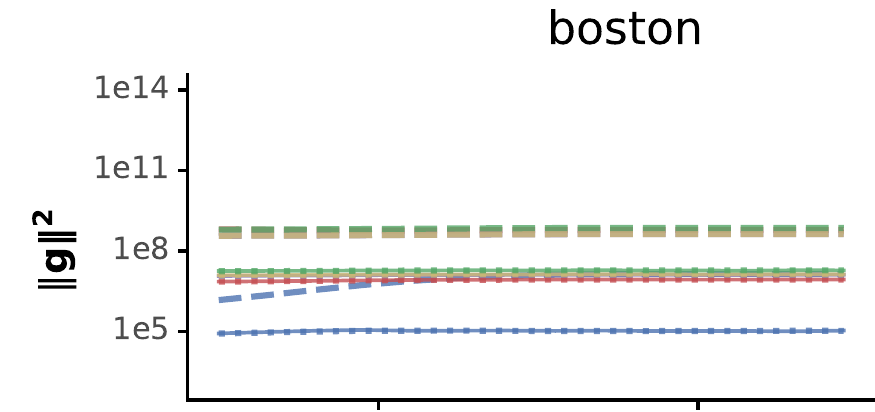}

\includegraphics[viewport=0bp 25.2bp 252bp 144bp,clip,scale=0.8]{\string"Variance Tests/fires_elbo\string".pdf}\includegraphics[viewport=0bp 25.2bp 252bp 144bp,clip,scale=0.8]{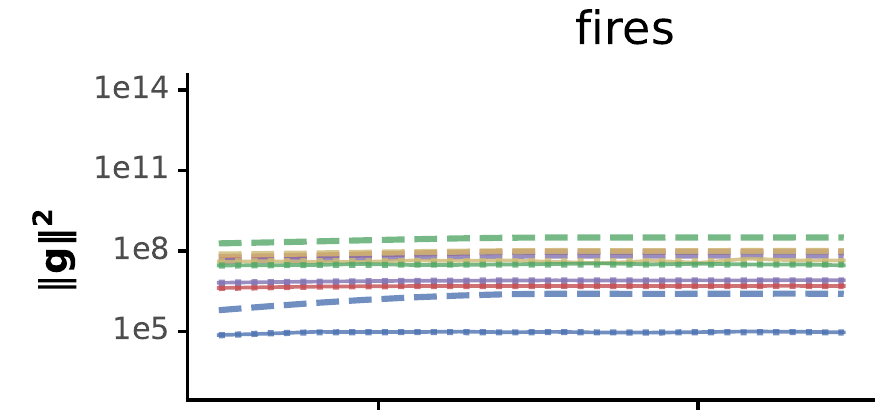}

\includegraphics[viewport=0bp 0bp 252bp 144bp,clip,scale=0.8]{\string"Variance Tests/cpusmall_elbo\string".pdf}\includegraphics[viewport=0bp 0bp 360bp 144bp,clip,scale=0.8]{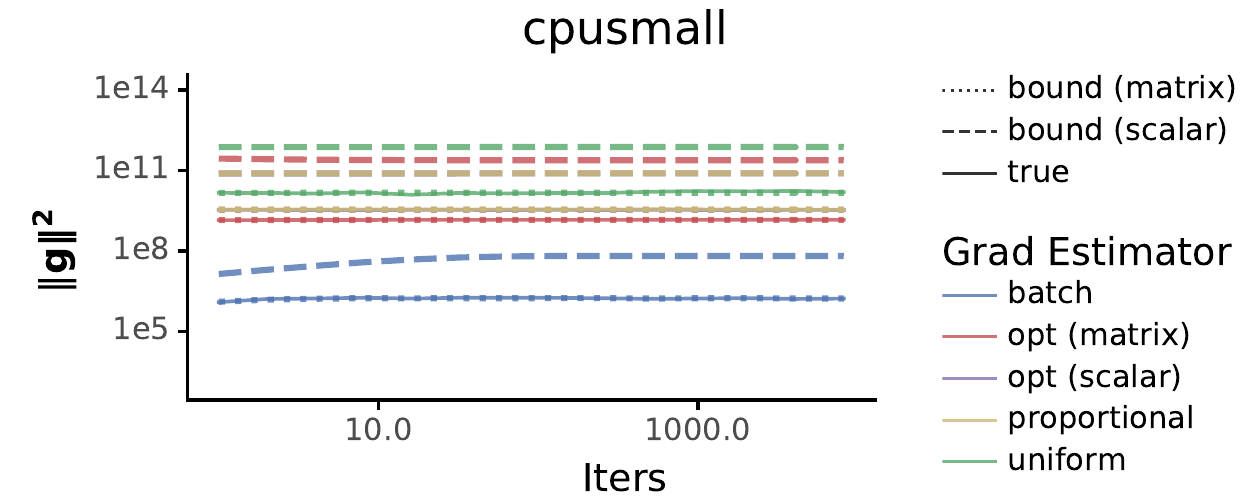}

\caption{More results in the same setting as \ref{fig:traces} (regression
data)}
\end{figure}

\begin{figure}[t]
\includegraphics[viewport=0bp 25.2bp 252bp 144bp,clip,scale=0.8]{\string"Variance Tests/a1a_elbo\string".pdf}\includegraphics[viewport=0bp 25.2bp 252bp 144bp,clip,scale=0.8]{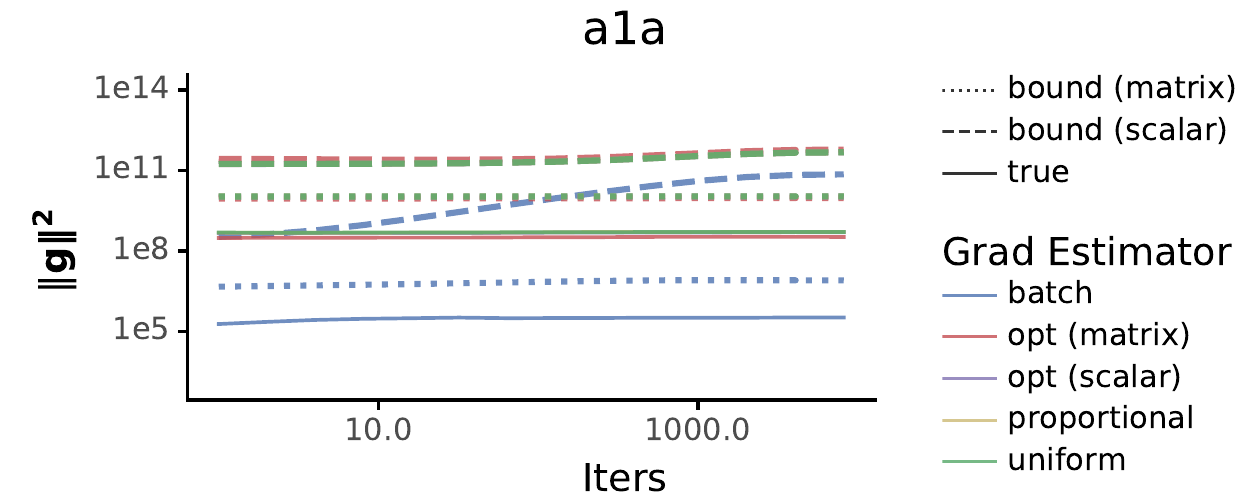}

\includegraphics[viewport=0bp 25.2bp 252bp 144bp,clip,scale=0.8]{\string"Variance Tests/ionosphere_elbo\string".pdf}\includegraphics[viewport=0bp 25.2bp 252bp 144bp,clip,scale=0.8]{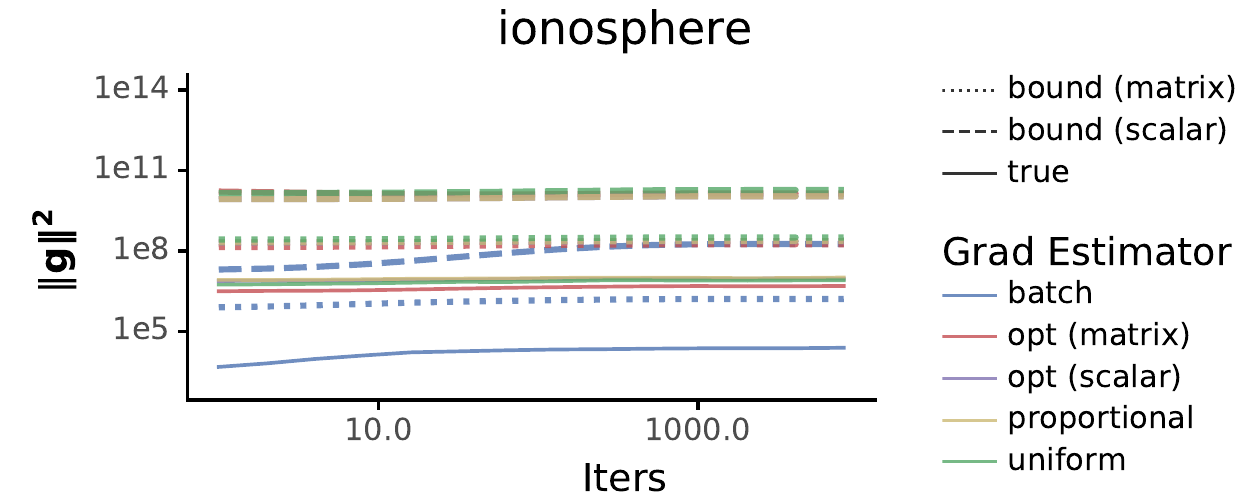}

\includegraphics[viewport=0bp 25.2bp 252bp 144bp,clip,scale=0.8]{\string"Variance Tests/australian_elbo\string".pdf}\includegraphics[viewport=0bp 25.2bp 252bp 144bp,clip,scale=0.8]{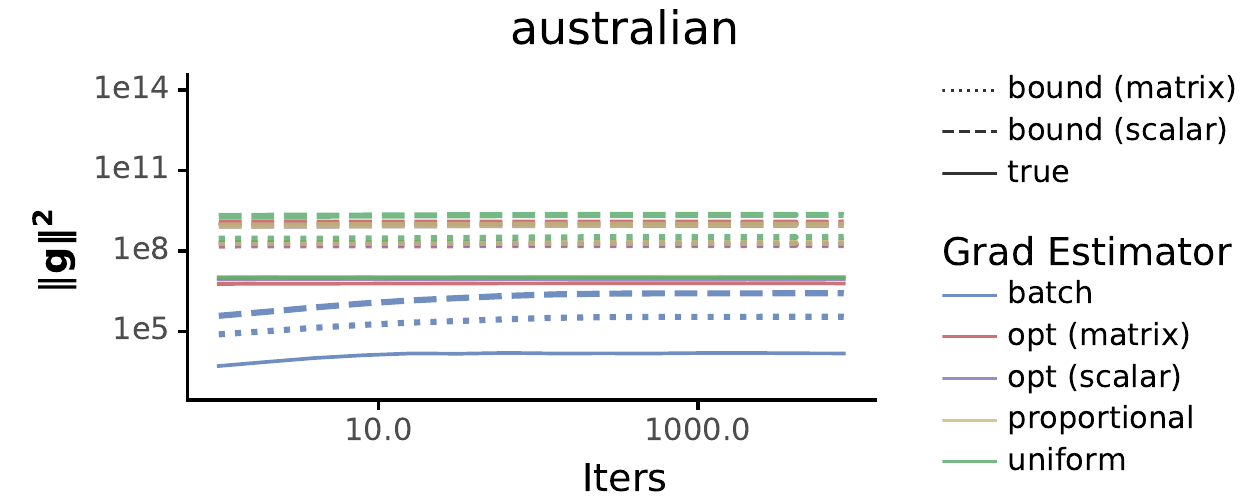}

\includegraphics[viewport=0bp 25.2bp 252bp 144bp,clip,scale=0.8]{\string"Variance Tests/sonar_elbo\string".pdf}\includegraphics[viewport=0bp 25.2bp 252bp 144bp,clip,scale=0.8]{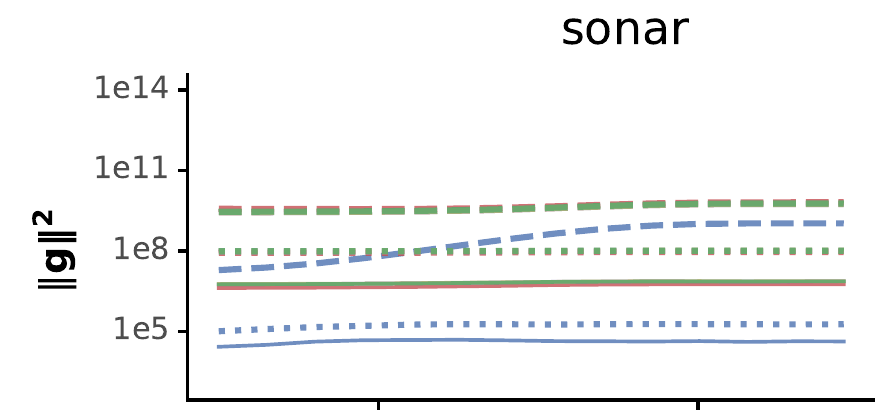}

\includegraphics[viewport=0bp 0bp 252bp 144bp,clip,scale=0.8]{\string"Variance Tests/mushrooms_elbo\string".pdf}\includegraphics[viewport=0bp 0bp 360bp 144bp,clip,scale=0.8]{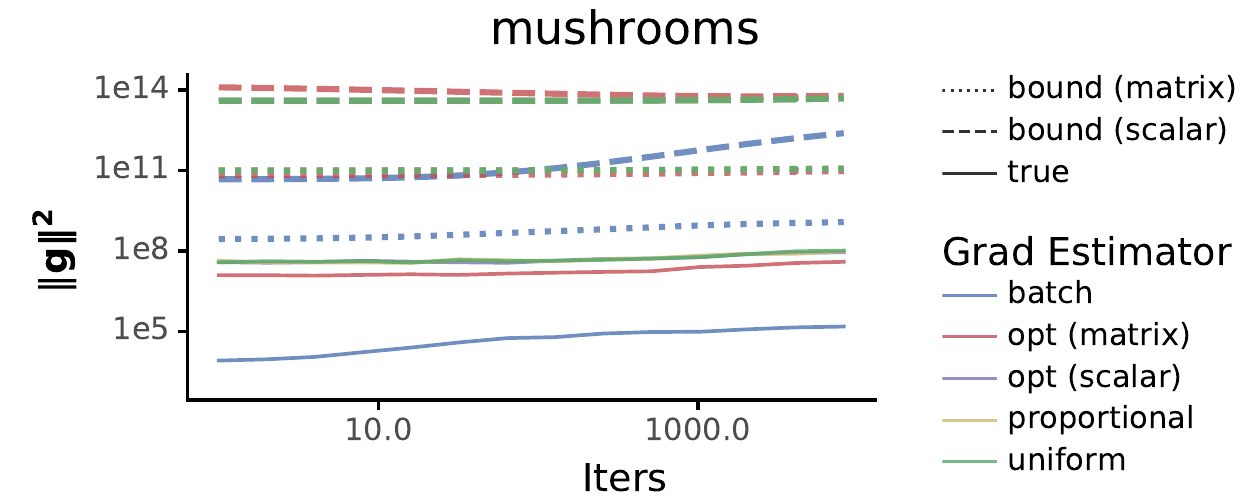}

\caption{More results in the same setting as \ref{fig:traces} (classification
data)}
\end{figure}

\clearpage{}

\section{Proofs\label{sec:Proofs}}

\subsection{Proof of \ref{lem:gradnorm}}

The following result is helpful for establishing \ref{lem:gradnorm}.

\begin{restatable}{lem1}{jactTjact}If $\nabla_{\b w}\b t_{\b w}\pp{\u}$
is Jacobian-transpose of $\b t_{\b w}\pp{\u}$ with respect to $\b w$,
then\label{lem:jac-t-jac-t}
\[
\nabla_{\b w}\T_{\b w}\pp{\u}^{\top}\nabla_{\b w}\T_{\b w}\pp{\u}=I\pp{1+\Verts{\u}_{2}^{2}}.
\]
\end{restatable}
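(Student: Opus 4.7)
The plan is to compute the Jacobian of $\T_{\b w}\pp{\u}$ with respect to $\b w$ entry by entry, and then form the product $\nabla_{\b w}\T_{\b w}\pp{\u}^{\top}\nabla_{\b w}\T_{\b w}\pp{\u}$ directly by summing over all parameters. Since $\T_{\b w}\pp{\u}=C\u+\b m$, the $k$-th output component is $\sum_j C_{kj}u_j+m_k$, so the only elementary partials needed are $\partial \T_k/\partial m_i=\delta_{ki}$ and $\partial \T_k/\partial C_{ij}=\delta_{ki}\,u_j$. That is the entire analytic content of the lemma; everything else is index bookkeeping.

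Next I would view $\nabla_{\b w}\T_{\b w}\pp{\u}$ as a $(d+d^{2})\times d$ matrix whose rows split into two blocks: a $d\times d$ block coming from the location parameters $\b m$, which is simply $I$, and a $d^{2}\times d$ block coming from the scale parameters $C$ with entries $\delta_{ki}u_j$ indexed by the pair $(i,j)$. The $(k,k')$-entry of the product is then a sum of contributions from the two blocks. The location block contributes $\sum_i\delta_{ki}\delta_{k'i}=\delta_{kk'}$. The scale block contributes $\sum_{i,j}(\delta_{ki}u_j)(\delta_{k'i}u_j)=\delta_{kk'}\sum_j u_j^{2}=\delta_{kk'}\Verts{\u}_{2}^{2}$. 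Adding these two contributions gives $\delta_{kk'}\pp{1+\Verts{\u}_{2}^{2}}$, which as a matrix is exactly $I\pp{1+\Verts{\u}_{2}^{2}}$.

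The main obstacle is purely notational: keeping the parameter-vs-matrix structure of $\b w=\pp{\b m,C}$ straight and being careful about which side of the product is transposed, since the paper treats $\nabla_{\b w}\T_{\b w}\pp{\u}$ as a Jacobian-transpose rather than a Jacobian. Once the convention is fixed, the identity structure emerges automatically from the Kronecker deltas $\delta_{ki}$ that appear in every partial derivative of an affine map, and no further inequalities or auxiliary lemmas are required.
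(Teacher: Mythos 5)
Your proof is correct and takes essentially the same approach as the paper's: both compute the partial derivatives of the affine map with respect to each entry of $\b m$ and $C$ and sum the resulting contributions over all parameters, your Kronecker-delta bookkeeping $\sum_{i}\delta_{ki}\delta_{k'i}+\sum_{i,j}\delta_{ki}\delta_{k'i}u_j^2=\delta_{kk'}\pp{1+\Verts{\u}_2^2}$ being just an entrywise rewriting of the paper's sum of rank-one outer products $\sum_i\b e_i\b e_i^\top+\sum_{i,j}u_j^2\b e_i\b e_i^\top$. There is no gap.
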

\begin{proof}
We use the notation $\nabla_{\b w}\T_{\b w}\pp{\u}=\frac{d\T_{\b w}\pp{\u}^{\top}}{d\b w}$,
meaning that $\pars{\nabla_{\b w}\T_{\b w}\pp{\u}}_{ij}=\frac{d\T_{\b w}\pp{\u}_{j}}{dw_{i}}.$

Each row of $\nabla_{\b w}\b t_{\b w}\pp{\u}$ consists of the partial
derivative of $\b t_{\b w}\pp{\u}$ with respect to one component
of $\b w$. Thus, the product is
\begin{eqnarray*}
\pars{\nabla_{\b w}\T_{\b w}\pp{\u}}^{\top}\pars{\nabla_{\b w}\T_{\b w}\pp{\u}} & = & \sum_{i}\pars{\frac{d}{dw_{i}}\T\pp{\u}}\pars{\frac{d}{dw_{i}}\T\pp{\u}}^{\top}\\
 & = & \sum_{i}\b a_{i}\pp{\u}\b a_{i}\pp{\u}^{\top}.
\end{eqnarray*}
We can calculate these components as

\begin{eqnarray*}
\pars{\frac{d}{dm_{i}}\T_{\b w}\pp{\u}}\pars{\frac{d}{dm_{i}}\T_{\b w}\pp{\u}}^{\top} & = & \b e_{i}\b e_{i}^{\top}\\
\pars{\frac{d}{dS_{ij}}\T_{\b w}\pp{\u}}\pars{\frac{d}{dS_{ij}}\T_{\b w}\pp{\u}}^{\top} & = & \pars{u_{j}\b e_{i}}\pars{u_{j}\b e_{i}}^{\top}\\
 & = & u_{j}^{2}\b e_{i}\b e_{i}^{\top}
\end{eqnarray*}
Adding the components up, we get that
\begin{eqnarray*}
\pars{\nabla_{\b w}\T_{\b w}\pp{\u}}^{\top}\pars{\nabla_{\b w}\T_{\b w}\pp{\u}} & = & \sum_{i}\pars{\frac{d}{dm_{i}}\T_{\b w}\pp{\u}}\pars{\frac{d}{dm_{i}}\T_{\b w}\pp{\u}}^{\top}+\sum_{i,j}\pars{\frac{d}{dS_{ij}}\T_{\b w}\pp{\u}}\pars{\frac{d}{dS_{ij}}\T_{\b w}\pp{\u}}^{\top}\\
 & = & \sum_{i}\b e_{i}\b e_{i}^{\top}+\sum_{i,j}u_{j}^{2}\b e_{i}\b e_{i}^{\top}\\
 & = & I\pp{1+\Verts{\u}_{2}^{2}}.
\end{eqnarray*}

The following is the main Lemma.
\end{proof}
\gradnorm*
\begin{proof}
Using Lemma \ref{lem:jac-t-jac-t}, we can show that
\begin{eqnarray*}
\Verts{\nabla_{\b w}f\pp{\T_{\b w}\pp{\u}}}_{2}^{2} & = & \Verts{\nabla_{\b w}\T_{\b w}\pp{\u}\ \nabla f\pp{\T_{\b w}\pp{\u}}}_{2}^{2}\\
 & = & \nabla f\pp{\T_{\b w}\pp{\u}}^{\top}\nabla_{\b w}\T_{\b w}\pp{\u}^{\top}\nabla_{\b w}\T_{\b w}\pp{\u}\ \nabla f\pp{\T_{\b w}\pp{\u}}\\
 & = & \nabla f\pp{\T_{\b w}\pp{\u}}^{\top}\pars{I\pp{1+\Verts{\u}_{2}^{2}}}\ \nabla f\pp{\T_{\b w}\pp{\u}}\\
 & = & \Verts{\nabla f\pp{\T_{\b w}\pp{\u}}}_{2}^{2}\pars{1+\Verts{\u}_{2}^{2}}.
\end{eqnarray*}

\end{proof}
\clearpage{}

\subsection{Proof of \ref{lem:expectedtminuszstar}}

A few distributional properties are needed before proving \ref{lem:expectedtminuszstar}.

\begin{restatable}{lem1}{uexpectations}Suppose that $\ur=\pp{\ur_{1},\cdots,\ur_{d}}$
is random variable over $\R^{d}$ with zero-mean iid components. Then
\begin{eqnarray*}
\E\ur\ur^{\top} & = & \E\bb{\ur_{1}^{2}}I\\
\E\Verts{\ur}_{2}^{2} & = & d\E\bb{\ur_{1}^{2}}\\
\E\ur\pp{1+\Verts{\ur}_{2}^{2}} & = & \b 1\ \E\bb{\ur_{1}^{3}}\\
\E\ur\ur^{\top}\ur\ur^{\top} & = & \pars{\pp{d-1}\E\bb{\ur_{1}^{2}}^{2}+\E\bb{\ur_{1}^{4}}}I.
\end{eqnarray*}
 \end{restatable}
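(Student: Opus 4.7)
The plan is to prove each of the four identities by expanding entry-wise, using linearity of expectation to separate the sum, and then applying two facts: (i) if a coordinate $\ur_i$ appears to an odd power in a product with no other $\ur_i$-factor present, independence with the remaining factors combined with $\E\ur_i=0$ forces the expectation to vanish; (ii) where indices coincide, identical distribution lets me replace $\E\ur_i^k$ by $\E\bb{\ur_1^k}$. The strategy is uniform: for each tensor entry, split into index-coincidence cases, zero out the ``odd isolated factor'' contributions, and count the survivors.

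For the first identity, the $(i,j)$ entry of $\E\ur\ur^{\top}$ equals $\E\ur_i\ur_j$: off the diagonal this is $\E\ur_i\,\E\ur_j=0$ by independence and zero mean, while on the diagonal it is $\E\bb{\ur_1^2}$ by identical distribution. The second identity is immediate from $\Verts{\ur}_2^2=\sum_i\ur_i^2$ and linearity. For the third identity, the $i$-th component of $\E\ur\pp{1+\Verts{\ur}_2^2}$ is $\E\ur_i+\sum_j\E\ur_i\ur_j^2$; the first term vanishes, and among the $j$-summed terms only $j=i$ contributes (the others factor as $\E\ur_i\cdot\E\ur_j^2=0$), leaving $\E\bb{\ur_1^3}$ in every coordinate, which is exactly $\b 1\,\E\bb{\ur_1^3}$.

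The bulk of the work is the fourth identity. I would first simplify $\ur\ur^{\top}\ur\ur^{\top}=\Verts{\ur}_2^2\,\ur\ur^{\top}$ since $\ur^{\top}\ur$ is a scalar, so the $(i,j)$ entry becomes $\sum_k\E\bb{\ur_k^2\,\ur_i\,\ur_j}$. For $i\neq j$, every term in the sum contains at least one coordinate raised to an odd power in isolation (either $\ur_i$ or $\ur_j$, regardless of whether $k$ coincides with $i$, $j$, or neither), so independence and zero mean kill the entire off-diagonal. For $i=j$, the single term $k=i$ contributes $\E\bb{\ur_1^4}$, while each of the $d-1$ terms with $k\neq i$ factors as $\E\bb{\ur_k^2}\E\bb{\ur_i^2}=\E\bb{\ur_1^2}^2$. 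Summing yields $\pp{d-1}\E\bb{\ur_1^2}^2+\E\bb{\ur_1^4}$ on the diagonal and zero off-diagonal, matching the claim.

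There is no real conceptual obstacle here — the proof is a mechanical exercise in case-splitting on index coincidences. The main place to be careful is the counting in the fourth identity, where one must correctly distinguish the single $k=i$ contribution from the $d-1$ contributions with $k\neq i$, and verify that for $i\neq j$ no choice of $k$ can pair up all four factors into even-powered groups. Everything else is justified directly by the stated iid and zero-mean assumptions, together with the implicit existence of the second, third, and fourth moments.
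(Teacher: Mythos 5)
Your proof is correct and follows essentially the same route as the paper's: entry-wise expansion with case-splitting on index coincidences, using independence and zero mean to annihilate terms with an isolated odd-power factor, and the same $(d-1)\E\bb{\ur_1^2}^2+\E\bb{\ur_1^4}$ count on the diagonal of the fourth identity. The only cosmetic difference is that the paper computes $\E\Verts{\ur}_2^2$ via $\tr\E\ur\ur^{\top}$ while you sum $\E\ur_i^2$ directly, which is trivially equivalent.
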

\begin{proof}
($\E\ur\ur^{\top}$) Take any pair of indices $i$ and $j$. Then,
$\pars{\E\ur\ur^{\top}}_{ij}=\E\ur_{i}\ur_{j}.$ If $i\not=j$ this
is zero. Otherwise it is $\E\ur_{1}^{2}.$ Thus, $\E\ur\ur^{\top}=\E\bb{\ur_{1}^{2}}I.$

($\E\Verts{\ur}_{2}^{2}$) This follows from the previous result as
\[
\E\Verts{\ur}_{2}^{2}=\E\tr\ur\ur^{\top}=\tr\E\ur\ur^{\top}=\tr\E\bb{\ur_{1}^{2}}I=d\E\bb{\ur_{1}^{2}}.
\]
($\E\ur\pp{1+\Verts{\ur}_{2}^{2}}$) If $\r x$ and $\r y$ are independent,
$\E\r x\r y=\pp{\E\r x}\pp{\E\r y}.$ Thus, since the first and third
moments of $\ur_{i}$ are zero,
\begin{eqnarray*}
\E\ur\pp{1+\Verts{\ur}_{2}^{2}}_{i} & = & \E\ur_{i}\pp{1+\sum_{j=1}^{d}\ur_{j}^{2}}\\
 & = & \E\bb{\ur_{i}}+\E\bb{\ur_{i}^{3}}+\sum_{j\not=i}\E\bb{\ur_{i}}\E\bb{\ur_{j}^{2}}\\
 & = & \E\bb{\ur_{i}^{3}}.
\end{eqnarray*}

($\E\ur\ur^{\top}\ur\ur^{\top}$) It is useful to represent this term
as
\begin{eqnarray*}
\pars{\E\ur\ur^{\top}\ur\ur^{\top}}_{ij} & = & \E\ur_{i}\ur_{j}\Verts{\ur}_{2}^{2}\\
 & = & \E\ur_{i}\ur_{j}\sum_{k}\ur_{k}^{2}.
\end{eqnarray*}

First, suppose that $i\not=j$. Then this is
\begin{eqnarray*}
\pars{\E\ur\ur^{\top}\ur\ur^{\top}}_{ij} & = & \E\ur_{i}\ur_{j}\sum_{k}\ur_{k}^{2}\\
 & = & \E\ur_{i}\ur_{j}\pars{\ur_{i}^{2}+\ur_{j}^{2}+\sum_{k\not\in\{i,j\}}\ur_{k}^{2}}.\\
 & = & 0.
\end{eqnarray*}
This is zero since $\ur_{i}$, $\ur_{j}$ and $\ur_{k}$ are independent,
and each term contains at least one of $\ur_{i}$ or $\ur_{j}$ to
the first power. Since $\E\ur_{i}=0,$ the full expectation is zero.

On the other hand, suppose that $i=j.$ Then this is
\begin{eqnarray*}
\pars{\E\ur\ur^{\top}\ur\ur^{\top}}_{ii} & = & \E\ur_{i}^{2}\pars{\ur_{i}^{2}+\sum_{k\not=i}\ur_{k}^{2}}\\
 & = & \E\pars{\ur_{i}^{4}+\ur_{i}^{2}\sum_{k\not=i}\ur_{k}^{2}}\\
 & = & \E\bb{\ur_{1}^{4}}+\pp{d-1}\E\bb{\ur_{1}^{2}}^{2}
\end{eqnarray*}

If we put this together, we get that
\[
\E\ur\ur^{\top}\ur\ur^{\top}=\pars{\pp{d-1}\E\bb{\ur_{1}^{2}}^{2}+\E\bb{\ur_{1}^{4}}}I.
\]
\end{proof}
\expectedtminuszstar*
\begin{proof}
We simply split the expectation up and calculate each part.

\begin{eqnarray*}
\E\Verts{\T_{\b w}\pp{\ur}-\zo}_{2}^{2}\pars{1+\Verts{\ur}_{2}^{2}} & = & \E\Verts{C\ur+\b m-\zo}_{2}^{2}\pars{1+\Verts{\ur}_{2}^{2}}\\
 & = & \E\pars{\Verts{C\ur}_{2}^{2}+2\pp{\b m-\zo}^{\top}C\ur+\Verts{\b m-\zo}_{2}^{2}}\pars{1+\Verts{\ur}_{2}^{2}}\\
\E\Verts{C\ur}_{2}^{2}\pars{1+\Verts{\ur}_{2}^{2}} & = & \E\Verts{C\ur}_{2}^{2}+\E\Verts{C\ur}_{2}^{2}\Verts{\ur}_{2}^{2}\\
\E\Verts{C\ur}_{2}^{2} & = & \E\tr\ur^{\top}C^{\top}C\ur\\
 & = & \tr C^{\top}C\E\ur\ur^{\top}\\
 & = & \tr C^{\top}C\E\bb{\ur_{1}^{2}}I\\
 & = & \E\bb{\ur_{1}^{2}}\ \tr C^{\top}C\\
\E\Verts{C\ur}_{2}^{2}\Verts{\ur}_{2}^{2} & = & \E\tr\ur^{\top}C^{\top}C\ur\ur^{\top}\ur\\
 & = & \tr C^{\top}C\ \E\ur\ur^{\top}\ur\ur^{\top}\\
 & = & \tr C^{\top}C\pars{\pp{d-1}\E\bb{\ur_{1}^{2}}^{2}+\E\bb{\ur_{1}^{4}}}I\\
 & = & \pars{\pp{d-1}\E\bb{\ur_{1}^{2}}^{2}+\E\bb{\ur_{1}^{4}}}\tr C^{\top}C\\
\E\Verts{C\ur}_{2}^{2}\pars{1+\Verts{\ur}_{2}^{2}} & = & \pars{\E\bb{\ur_{1}^{2}}+\pp{d-1}\E\bb{\ur_{1}^{2}}^{2}+\E\bb{\ur_{1}^{4}}}\tr C^{\top}C\\
\E\pp{\b m-\zo}^{\top}C\u\pars{1+\Verts{\ur}_{2}^{2}} & = & \pp{\b m-\zo}^{\top}C\ \E\u\pars{1+\Verts{\ur}_{2}^{2}}\\
 & = & \pp{\b m-\zo}^{\top}C\ \b 1\ \E\bb{\ur_{1}^{3}}\\
 & = & 0\\
\E\Verts{\b m-\zo}_{2}^{2}\pp{1+\Verts{\ur}_{2}^{2}} & = & \Verts{\b m-\zo}_{2}^{2}\E\pp{1+\Verts{\ur}_{2}^{2}}\\
 & = & \Verts{\b m-\zo}_{2}^{2}\pp{1+d\E\bb{\ur_{1}^{2}}}.
\end{eqnarray*}
Adding all this up gives that
\[
\E\Verts{\T_{\b w}\pp{\ur}-\zo}_{2}^{2}\pars{1+\Verts{\ur}_{2}^{2}}=\pars{1+d\E\bb{\ur_{1}^{2}}}\Verts{\b m-\zo}_{2}^{2}+\pars{\E\bb{\ur_{1}^{2}}+\pp{d-1}\E\bb{\ur_{1}^{2}}^{2}+\E\bb{\ur_{1}^{4}}}\Verts C_{F}^{2}.
\]
In the case that the variance is one, this becomes
\[
\E\Verts{\T_{\b w}\pp{\ur}-\zo}_{2}^{2}\pars{1+\Verts{\ur}_{2}^{2}}=\pars{d+1}\Verts{\b m-\zo}_{2}^{2}+\pars{d+\E\bb{\ur_{1}^{4}}}\Verts C_{F}^{2}.
\]
\end{proof}
\clearpage{}

\subsection{Proof of \ref{thm:var-vs-esn}\label{subsec:Variance-vs-Expected}}

\varvesn*
\begin{proof}
First, take any matrix $M$ and vector $\bar{\b z}.$ Define
\[
f\pp{\z}=\frac{1}{2}\pp{\z-\bar{\z}}^{\top}M\pp{\z-\bar{\z}}.
\]
We can calculate that
\begin{eqnarray*}
l\pp{\w} & = & \E_{\zr\sim q_{\w}}\frac{1}{2}\pp{\z-\bar{\z}}^{\top}M\pp{\z-\bar{\z}}\\
 & = & \E_{\zr\sim q_{\w}}\frac{1}{2}\z^{\top}M\z-\E_{\zr\sim q_{\w}}\bar{\z}^{\top}M\z+\E_{\zr\sim q_{\w}}\frac{1}{2}\bar{\z}^{\top}M\bar{\z}\\
 & = & \E_{\zr\sim q_{\w}}\frac{1}{2}\tr M\z\z^{\top}-\bar{\z}^{\top}M\b m+\frac{1}{2}\bar{\z}^{\top}M\bar{\z}\\
 & = & \frac{1}{2}\tr M\pp{\b m\b m^{\top}+CC^{\top}}-\bar{\z}^{\top}M\b m+\frac{1}{2}\bar{\z}^{\top}M\bar{\z}\\
 & = & \frac{1}{2}\b m^{\top}M\b m+\frac{1}{2}\tr MCC^{\top}-\bar{\z}^{\top}M\b m+\frac{1}{2}\bar{\z}^{\top}M\bar{\z}\\
 & = & \frac{1}{2}\pp{\b m-\bar{\z}}^{\top}M\pp{\b m-\bar{\z}}+\frac{1}{2}\tr MCC^{\top}.
\end{eqnarray*}
Thus, we have that
\begin{eqnarray*}
\frac{dl}{d\b m} & = & M\pp{\b m-\bar{\z}}\\
\frac{dl}{dC} & = & MC
\end{eqnarray*}
If we add up components, we get that
\[
\VV{\E\gr}_{2}^{2}=\VV{\nabla l\pp{\w}}_{2}^{2}=\VV{M\pp{\b m-\bar{\z}}}_{2}^{2}+\VV{MC}_{F}^{2}.
\]

Now, given a sequence $M_{1},\cdots,M_{N}$ and $\bar{\z}_{1},\cdots,\bar{\z}_{N}$,
if we choose
\[
f_{n}\pp{\z}=\frac{1}{2}\pp{\z-\bar{\z}_{n}}^{\top}M_{n}\pp{\z-\bar{\z}_{n}},
\]
The true gradient will be
\begin{eqnarray*}
\frac{dl}{d\b m} & = & \sum_{n=1}^{N}M_{n}\pp{\b m-\bar{\z}_{n}}\\
\frac{dl}{dC} & = & M_{n}C,
\end{eqnarray*}
and so, applying Jensen's inequality,
\begin{eqnarray*}
\Verts{\E\r g}_{2}^{2} & = & \Verts{\nabla l\pp{\w}}_{2}^{2}\\
 & = & \Verts{\sum_{n=1}^{N}M_{n}\pp{\b m-\bar{\z}_{n}}}_{2}^{2}+\Verts{\sum_{n=1}^{N}M_{n}C}_{F}^{2}\\
 & = & \Verts{\sum_{n=1}^{N}\frac{1}{\pi\pp n}\pi\pp nM_{n}\pp{\b m-\bar{\z}_{n}}}_{2}^{2}+\Verts{\sum_{n=1}^{N}\frac{1}{\pi\pp n}\pi\pp nM_{n}C}_{F}^{2}\\
 & \leq & \sum_{n=1}^{N}\pi\pp n\Verts{\frac{1}{\pi\pp n}M_{n}\pp{\b m-\bar{\z}_{n}}}_{2}^{2}+\sum_{n=1}^{N}\pi\pp n\Verts{\frac{1}{\pi\pp n}M_{n}C}_{F}^{2}\\
 & = & \sum_{n=1}^{N}\frac{1}{\pi\pp n}\pars{\Verts{M_{n}\pp{\b m-\bar{\z}_{n}}}_{2}^{2}+\Verts{M_{n}C}_{F}^{2}}.
\end{eqnarray*}
\ref{thm:g2-stoch} tells us that
\[
\E\Verts{\r g}_{2}^{2}=\sum_{n=1}^{N}\frac{1}{\pi\pp n}\pars{\pars{d+1}\Verts{M_{n}\pp{\b m-\zo_{n}}}_{2}^{2}+\pars{d+\kappa}\Verts{M_{n}C}_{F}^{2}}.
\]
Thus, we have that
\begin{eqnarray*}
\tr\V\Vert\gr\Vert_{2}^{2} & = & \E\Vert\gr\Vert^{2}-\Vert\E\gr\Vert^{2}\\
 & \geq & \sum_{n=1}^{N}\frac{1}{\pi\pp n}\pars{d\Verts{M_{n}\pp{\b m-\zo_{n}}}_{2}^{2}+\pars{d+\kappa-1}\Verts{M_{n}C}_{F}^{2}}.
\end{eqnarray*}
\end{proof}

\clearpage{}

\section{Smoothness conditions for linear models\label{sec:Smoothness-conditions-for-linear}}
\begin{lem}
Suppose that $f\pp z=\phi\pp{a^{\top}z},$ and that $\verts{\phi''\pp t}\leq\theta$
for all $t$. Then,
\[
\Verts{\nabla f\pp y-\nabla f\pp z}_{2}\leq\theta\Verts a_{2}\verts{a^{\top}\pp{y-z}}.
\]
\end{lem}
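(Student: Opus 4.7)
The plan is to reduce the claim to a one-dimensional smoothness statement about $\phi$ via the chain rule. First I would compute $\nabla f(z)$: since $f(z) = \phi(a^\top z)$ is a scalar function of the linear form $a^\top z$, the chain rule gives $\nabla f(z) = \phi'(a^\top z)\, a$. This is the only substantive calculation and it is immediate.

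Next I would subtract to get $\nabla f(y) - \nabla f(z) = \bigl(\phi'(a^\top y) - \phi'(a^\top z)\bigr) a$, and then take the Euclidean norm, pulling out the scalar factor:
\[
\Verts{\nabla f(y) - \nabla f(z)}_2 = \verts{\phi'(a^\top y) - \phi'(a^\top z)} \cdot \Verts{a}_2.
\]
At this point the problem has been reduced entirely to controlling the one-dimensional increment $\phi'(a^\top y) - \phi'(a^\top z)$ in terms of its arguments.

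The last ingredient is the bound on $\phi''$. By the mean value theorem applied to $\phi'$ (or equivalently by writing $\phi'(s) - \phi'(t) = \int_t^s \phi''(r)\, dr$ and taking absolute values), the assumption $\verts{\phi''(t)} \leq \theta$ yields $\verts{\phi'(s) - \phi'(t)} \leq \theta \verts{s - t}$ for all scalars $s,t$. Applying this with $s = a^\top y$ and $t = a^\top z$ gives $\verts{\phi'(a^\top y) - \phi'(a^\top z)} \leq \theta \verts{a^\top(y-z)}$, and plugging into the display above produces exactly the desired inequality.

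There is essentially no obstacle here; every step is routine one-line reasoning. The only mild subtlety is making explicit that the scalar Lipschitz property of $\phi'$ follows from the pointwise bound on $\phi''$, which is justified cleanly either by the mean value theorem or by the fundamental theorem of calculus as above. No tightness claim is asserted by the lemma, so no matching example is required.
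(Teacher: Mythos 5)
Your proof is correct and matches the paper's argument essentially step for step: compute $\nabla f(z)=\phi'(a^{\top}z)\,a$ via the chain rule, factor the scalar out of the norm, and bound $\verts{\phi'(a^{\top}y)-\phi'(a^{\top}z)}$ by $\theta\verts{a^{\top}(y-z)}$ using $\phi'(s)-\phi'(t)=\int_t^s\phi''(r)\,dr$, which is exactly the paper's route. No gaps; nothing further is needed.
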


\begin{proof}
Then, we have that
\begin{eqnarray*}
\Verts{\nabla f\pp y-\nabla f\pp z}_{2} & = & \Verts{a\phi'\pp{a^{\top}y}-a\phi'\pp{a^{\top}z}}_{2}\\
 & = & \Verts a_{2}\verts{\phi'\pp{a^{\top}y}-\phi'\pp{a^{\top}z}}\\
 & = & \Verts a_{2}\verts{\int_{a^{\top}z}^{a^{\top}y}\phi''\pp tdt}\\
 & \leq & \theta\Verts a_{2}\verts{a^{\top}\pp{y-z}}.
\end{eqnarray*}
\end{proof}
\begin{lem}
Suppose that $f\pp z=f_{0}\pp z+\phi\pp{a^{\top}z}$ and that $f_{0}\pp z$
is $M_{0}$ smooth. Then, we have that
\begin{eqnarray*}
\Verts{\nabla f\pp y-\nabla f\pp z}_{2} & = & M_{0}\Verts{y-z}_{2}+\theta\Verts a_{2}\verts{a^{\top}\pp{y-z}}.
\end{eqnarray*}
\end{lem}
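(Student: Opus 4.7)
The plan is to decompose the gradient of $f$ into the contributions from $f_0$ and from the composed term $\phi(a^\top z)$, then bound the change in each piece separately via the triangle inequality. This is exactly the setting where the preceding lemma (about single-ridge functions $\phi(a^\top z)$) was designed to be used as a building block.

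First I would write $\nabla f(z) = \nabla f_{0}(z) + a\,\phi'(a^{\top}z)$, which follows from linearity of differentiation and the chain rule applied to $\phi(a^\top z)$. Subtracting the analogous expression at $y$ gives
\[
\nabla f(y)-\nabla f(z) \;=\; \bigl[\nabla f_{0}(y)-\nabla f_{0}(z)\bigr] \;+\; \bigl[a\,\phi'(a^{\top}y)-a\,\phi'(a^{\top}z)\bigr].
\]
Next I would apply the triangle inequality in $\|\cdot\|_2$ to split the bound into two independent pieces.

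Then each piece is handled by an assumption already in hand: the first piece is at most $M_{0}\Verts{y-z}_{2}$ by the hypothesis that $f_{0}$ is $M_{0}$-smooth, and the second piece is at most $\theta\Verts{a}_{2}\verts{a^{\top}(y-z)}$ by the immediately preceding lemma applied to the ridge function $\phi(a^\top z)$ (noting that $\verts{\phi''(t)}\leq\theta$ is the standing hypothesis in that lemma). Adding these two bounds yields the claimed inequality.

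There is no real obstacle here; the proof is a one-line triangle inequality followed by invocation of two previously established facts. The only small bookkeeping issue is that the statement is written with an equality sign ($=$), whereas it should read as an inequality ($\leq$), since the triangle inequality need not be tight and the smoothness hypothesis on $f_0$ gives an upper bound rather than an identity. I would present the result as $\leq$ and, if desired, remark that for $f_0$ quadratic and $\phi$ quadratic one recovers equality up to the sign patterns of the two increments.
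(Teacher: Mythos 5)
Your proof is correct and is precisely the argument the paper intends: the paper states this lemma without any proof at all, treating it as an immediate consequence of the preceding ridge-function lemma together with the triangle inequality, which is exactly the two-step decomposition you supply. You are also right to flag the displayed $=$ as a typo for $\leq$, since both the triangle inequality and the $M_{0}$-smoothness of $f_{0}$ give only upper bounds, so the conclusion should read $\Verts{\nabla f\pp y-\nabla f\pp z}_{2}\leq M_{0}\Verts{y-z}_{2}+\theta\Verts{a}_{2}\verts{a^{\top}\pp{y-z}}$ (and, as you note, the hypothesis $\verts{\phi''\pp t}\leq\theta$ carried over from the preceding lemma should properly be restated here).
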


\begin{lem}
Suppose that $f\pp z=\sum_{i=1}^{N}\phi\pp{a_{i}^{\top}z}$ and that
$0\leq\phi''\pp t\leq\theta$ for all $t$. Then, 
\begin{eqnarray*}
\Verts{\nabla f\pp y-\nabla f\pp z}_{2} & \leq & \Verts{M\pp{y-z}}_{2}\\
M & = & \theta\sum_{i=1}^{N}a_{i}a_{i}^{\top}
\end{eqnarray*}
\end{lem}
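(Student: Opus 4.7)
My plan is to reduce the claim to a matrix-level comparison by expressing the gradient difference via the fundamental theorem of calculus. Since $f(z)=\sum_{n}\phi(a_{n}^{\top}z)$, the chain rule gives $\nabla f(z)=\sum_{n}\phi'(a_{n}^{\top}z)\,a_{n}$, so
\[
\nabla f(y)-\nabla f(z)\;=\;\sum_{n}\bigl[\phi'(a_{n}^{\top}y)-\phi'(a_{n}^{\top}z)\bigr]\,a_{n}.
\]
For each summand I would apply the FTC along the segment from $a_{n}^{\top}z$ to $a_{n}^{\top}y$ to write $\phi'(a_{n}^{\top}y)-\phi'(a_{n}^{\top}z)=c_{n}\,a_{n}^{\top}(y-z)$, where $c_{n}:=\int_{0}^{1}\phi''\bigl(a_{n}^{\top}z+t\,a_{n}^{\top}(y-z)\bigr)\,dt$. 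The hypothesis $0\le\phi''\le\theta$ then gives $0\le c_{n}\le\theta$ for every $n$.

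Assembling the pieces, the gradient difference is linear in $y-z$:
\[
\nabla f(y)-\nabla f(z)\;=\;\Bigl(\sum_{n}c_{n}\,a_{n}a_{n}^{\top}\Bigr)(y-z)\;=:\;B(y-z),
\]
and in the Loewner order one has $0\preceq B\preceq M$, because $B$ is a non-negative combination of the $a_{n}a_{n}^{\top}$ and $M-B=\sum_{n}(\theta-c_{n})\,a_{n}a_{n}^{\top}\succeq 0$. This part of the plan is essentially routine bookkeeping following the same template as the two preceding scalar-output lemmas in the appendix.

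The remaining step, and what I expect to be the main obstacle, is to pass from the Loewner inequality $B\preceq M$ to the norm inequality $\|B(y-z)\|_{2}\le\|M(y-z)\|_{2}$. A Loewner bound on symmetric matrices does not by itself imply $B^{2}\preceq M^{2}$, so one cannot simply write $v^{\top}B^{2}v\le v^{\top}M^{2}v$ for an arbitrary $v$ and be done. The proof must therefore leverage the shared-direction structure of $B$ and $M$: both are positive combinations of the \emph{same} rank-one terms $a_{n}a_{n}^{\top}$. The natural way to exploit this is to factor $B=A^{\top}DA$ and $M=\theta A^{\top}A$, where $A$ is the matrix with rows $a_{n}^{\top}$ and $D=\operatorname{diag}(c_{n})$ with $0\le D\le\theta I$, and then reduce to checking a quadratic-form inequality of the form $w^{\top}(\theta^{2}AA^{\top}-DAA^{\top}D)w\ge 0$ at the specific vector $w=Av$. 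I would attack this via a Cauchy--Schwarz or rearrangement argument on the Gram matrix $AA^{\top}$, possibly combined with the observation that $c_{n}$ and $a_{n}^{\top}(y-z)$ have a common sign structure (since $c_{n}\ge 0$). This passage is where the substantive work lies.
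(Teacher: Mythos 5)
Your first two steps are sound and coincide with the paper's own reduction: the fundamental theorem of calculus gives $\nabla f(y)-\nabla f(z)=B(y-z)$ with $B=\sum_n c_n a_na_n^\top$ and $0\le c_n\le\theta$, hence $0\preceq B\preceq M$. You are also right to be suspicious of the remaining step --- but the obstacle you flag is not merely hard, it is insurmountable, because the inequality $\|Bv\|_2\le\|Mv\|_2$ is \emph{false} even with the shared rank-one structure you propose to exploit, and consequently the lemma as stated is false. Take $\theta=1$, $a_1=(1,10)^\top$, $a_2=(1,-10)^\top$, $v=y-z=(1,0)^\top$, and $c_1=1$, $c_2=0$: then $Bv=(a_1^\top v)\,a_1=(1,10)^\top$ has norm $\sqrt{101}$, while $Mv=(a_1^\top v)\,a_1+(a_2^\top v)\,a_2=(2,0)^\top$ has norm $2$. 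These $c_n$ are realizable by a single admissible $\phi$: with $z=(50,-5)^\top$ and $y=(51,-5)^\top$ the arguments $a_1^\top$ and $a_2^\top$ traverse the intervals $[0,1]$ and $[100,101]$ respectively, so any smooth $\phi$ with $\phi''=\theta$ on $[0,1]$, $\phi''=0$ on $[100,101]$, and $0\le\phi''\le\theta$ everywhere yields $\|\nabla f(y)-\nabla f(z)\|_2=\sqrt{101}>2=\|M(y-z)\|_2$. In your own notation, $w=Av=(1,1)^\top$ and $w^\top D(AA^\top)Dw=101>4=w^\top(AA^\top)w$ (here $a_1^\top a_2=-99$), so no Cauchy--Schwarz or rearrangement argument at $w=Av$ can close the gap.

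For comparison, the paper's proof founders at exactly the point you identified. It claims the convex quadratic $b\mapsto\|\sum_i b_i a_ia_i^\top v\|_2^2$ is maximized over the box $[0,\theta]^N$ at $b=\theta\mathbf{1}$, justified by asserting that $\frac{\partial}{\partial b_k}=2\,(a_k^\top v)\,a_k^\top Bv\ge 0$ because a product of two PSD matrices has nonnegative eigenvalues. But that derivative is $2\tr\pars{a_ka_k^\top\,B\,vv^\top}$, a \emph{three}-factor product, and it can be negative: in the example above with $b=(1,\tfrac12)$ one gets $Bv=(1.5,5)^\top$ and $\frac{\partial}{\partial b_2}=2(a_2^\top v)(a_2^\top Bv)=-97$, and the box maximum sits at $b=(1,0)$, not at $b=(1,1)$. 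What does survive from $0\preceq B\preceq M$ is the standard weaker bound $\|Bv\|_2^2=v^\top B^2v\le\|B\|_2\,v^\top Bv\le\|M\|_2\,v^\top Mv$, i.e.\ $\|\nabla f(y)-\nabla f(z)\|_2\le\|M\|_2^{1/2}\,\|M^{1/2}(y-z)\|_2$, so $f$ is matrix-smooth with respect to $\|M\|_2^{1/2}M^{1/2}$ (and scalar-smooth with constant $\|M\|_2$), which is consistent with my counterexample. So your write-up should not try to complete the final step but instead record the counterexample and the corrected matrix constant; as a byproduct this also invalidates the paper's subsequent theorem with the added $cI$ term, whose proof repeats the same derivative argument.
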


\begin{proof}
\begin{eqnarray*}
\Verts{\nabla f\pp y-\nabla f\pp z}_{2} & = & \Verts{\sum_{i=1}^{N}a_{i}\phi'\pp{a_{i}y}-\sum_{i=1}^{N}a_{i}\phi'\pp{a_{i}z}}_{2}\\
 & = & \Verts{\sum_{i=1}^{N}a_{i}\pars{\phi'\pp{a_{i}y}-\phi'\pp{a_{i}z}}}_{2}\\
 & = & \Verts{\sum_{i=1}^{N}a_{i}\int_{a_{i}^{\top}z}^{a_{i}^{\top}y}\phi''\pp tdt}_{2}\\
 & = & \Verts{\sum_{i=1}^{N}a_{i}\pars{a_{i}^{\top}y-a_{i}^{\top}z}b_{i}}_{2}\\
 &  & -\theta\leq b_{i}\leq\theta\\
 & = & \Verts{\sum_{i=1}^{N}b_{i}a_{i}a_{i}^{\top}\pars{y-z}}_{2}\\
 & \leq & \theta\Verts{\pars{\sum_{i=1}^{N}a_{i}a_{i}^{\top}}\pars{y-z}}_{2}
\end{eqnarray*}

The final inequality is justified by the following claim: $\Verts{\sum_{i=1}^{N}b_{i}a_{i}a_{i}^{\top}\pars{y-z}}_{2}^{2}$
is maximized over vectors $b$ with $0\leq b_{i}\leq\theta$ by setting
$b_{i}=\theta$ always. To establish this claim observe that
\begin{eqnarray*}
\frac{d}{db_{k}}\Verts{\sum_{i=1}^{N}b_{i}a_{i}a_{i}^{\top}\pars{y-z}}_{2}^{2} & = & \frac{d}{db_{k}}\pars{\sum_{i=1}^{N}b_{i}a_{i}a_{i}^{\top}\pars{y-z}}^{\top}\pars{\sum_{j=1}^{N}b_{j}a_{j}a_{j}^{\top}\pars{y-z}}\\
 & = & \frac{d}{db_{k}}\sum_{i=1}^{N}\sum_{j=1}^{N}b_{i}b_{j}\pars{y-z}^{\top}\pars{a_{i}a_{i}^{\top}a_{j}a_{j}^{\top}}\pars{y-z}\\
 & = & \frac{d}{db_{k}}2\sum_{j\not=k}^{N}b_{k}b_{j}\pars{y-z}^{\top}\pars{a_{k}a_{k}^{\top}a_{j}a_{j}^{\top}}\pars{y-z}\\
 &  & +\frac{d}{db_{k}}b_{k}^{2}\pars{y-z}^{\top}\pars{a_{k}a_{k}^{\top}a_{k}a_{k}^{\top}}\pars{y-z}\\
 & = & 2\sum_{j\not=k}^{N}b_{j}\pars{y-z}^{\top}\pars{a_{k}a_{k}^{\top}a_{j}a_{j}^{\top}}\pars{y-z}\\
 &  & +2b_{k}\pars{y-z}^{\top}\pars{a_{k}a_{k}^{\top}a_{k}a_{k}^{\top}}\pars{y-z}\\
 & = & 2\sum_{j=1}^{N}b_{j}\pars{y-z}^{\top}\pars{a_{k}a_{k}^{\top}a_{j}a_{j}^{\top}}\pars{y-z}\\
 & = & 2\sum_{j=1}^{N}b_{j}\tr\pars{y-z}^{\top}\pars{a_{k}a_{k}^{\top}a_{j}a_{j}^{\top}}\pars{y-z}\\
 & = & 2\tr a_{k}a_{k}^{\top}\pars{\sum_{j=1}^{N}b_{j}a_{j}a_{j}^{\top}}\pars{y-z}\pars{y-z}^{\top}\\
 & = & 2a_{k}^{\top}\pars{\sum_{j=1}^{N}b_{j}a_{j}a_{j}^{\top}}\pars{y-z}\pars{y-z}^{\top}a_{k}
\end{eqnarray*}
Now, both $\pars{\sum_{j=1}^{N}b_{j}a_{j}a_{j}^{\top}}$ and $\pars{y-z}\pars{y-z}^{\top}$
are real symmetric positive definite matrices. Thus, their product
has real non-negative eigenvalues. This means that
\[
\frac{d}{db_{k}}\Verts{\sum_{i=1}^{N}b_{i}a_{i}a_{i}^{\top}\pars{y-z}}_{2}^{2}\geq0,
\]
i.e. the maximizing $b$ will set all entries to $\theta$.
\end{proof}
\begin{thm}
Suppose that $f\pp z=\frac{c}{2}\Verts z_{2}^{2}+\sum_{i=1}^{N}\phi\pp{a_{i}^{\top}z}$
and that $0\leq\phi''\pp t\leq\theta.$ Then,
\begin{eqnarray*}
\Verts{\nabla f\pp y-\nabla f\pp z}_{2} & \leq & \Verts{M\pp{y-z}}_{2}\\
M & = & cI+\theta\sum_{i=1}^{N}a_{i}a_{i}^{\top}
\end{eqnarray*}
\end{thm}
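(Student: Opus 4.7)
The plan is to extend the proof of the preceding lemma (which handled the case $c=0$) by absorbing the quadratic term $\frac{c}{2}\Verts{z}_{2}^{2}$ into the same derivative-based argument. First I would compute $\nabla f\pp{z} = cz + \sum_{i=1}^{N} a_i \phi'\pp{a_i^{\top}z}$ and subtract to get
\[
\nabla f\pp{y} - \nabla f\pp{z} = c\pars{y - z} + \sum_{i=1}^{N} a_i\bracs{\phi'\pp{a_i^{\top}y} - \phi'\pp{a_i^{\top}z}}.
\]
By the fundamental theorem of calculus applied to each scalar difference, $\phi'\pp{a_i^{\top}y} - \phi'\pp{a_i^{\top}z} = \bar b_i \, a_i^{\top}\pars{y - z}$, where $\bar b_i := \int_0^1 \phi''\pp{a_i^{\top}z + t\, a_i^{\top}\pars{y-z}}\,dt$ lies in $[0,\theta]$ since $0\leq\phi''\leq\theta$. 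This puts the gradient difference into the clean linear form $\nabla f\pp{y} - \nabla f\pp{z} = B\pp{\bar b}\pars{y - z}$, where I define $B\pp{b} := cI + \sum_{i=1}^{N} b_i a_i a_i^{\top}$ for $b\in\R^{N}$.

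With this representation in hand, the theorem reduces to showing that $\Verts{B\pp{b}u}_{2} \leq \Verts{B\pp{\theta\mathbf{1}}u}_{2} = \Verts{Mu}_{2}$ for every $u := y - z$ and every $b \in [0,\theta]^{N}$. I would establish this exactly as in the final paragraph of the preceding lemma's proof: compute
\[
\frac{d}{db_k}\Verts{B\pp{b}u}_{2}^{2} = 2\pars{a_k^{\top}u}\pars{a_k^{\top}B\pp{b}u},
\]
then argue the right-hand side is non-negative because $B\pp{b}$ and $uu^{\top}$ are both positive semidefinite, so their product has non-negative eigenvalues. Consequently $\Verts{B\pp{b}u}_{2}^{2}$ is non-decreasing in each coordinate of $b$ on $[0,\theta]^{N}$, so its maximum over that box is attained at $b = \theta\mathbf{1}$, which gives exactly $\Verts{Mu}_{2}^{2}$.

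The main obstacle is this monotonicity step, but it transfers from the preceding lemma with essentially no modification: $cI + \sum_{j} b_j a_j a_j^{\top}$ remains positive semidefinite for every $b\in[0,\theta]^{N}$, so the PSD-product-eigenvalue argument applies verbatim. An alternative route I would also consider, since it avoids rewriting the derivative identity, is to express $\frac{c}{2}\Verts{z}_{2}^{2} = \sum_{j=1}^{d}\psi_{j}\pp{e_{j}^{\top}z}$ with $\psi_{j}\pp{t} = \frac{c}{2}t^{2}$ and $\psi_{j}'' \equiv c$, thereby recasting $f$ as a single sum $\sum \phi_{i}\pp{a_{i}^{\top}z}$ of the same form as in the preceding lemma but with mixed curvature bounds ($c$ along the coordinate directions $e_{j}$ and $\theta$ along the data directions $a_{i}$); a trivial generalization of the preceding lemma that allows distinct per-summand smoothness constants then directly yields $M = cI + \theta\sum_{i=1}^{N}a_{i}a_{i}^{\top}$.
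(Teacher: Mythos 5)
Your proposal is a faithful (indeed, cleaner) reconstruction of the paper's own argument: the fundamental-theorem-of-calculus step giving $\nabla f\pp{y}-\nabla f\pp{z}=B\pp{b}\pars{y-z}$ with $B\pp{b}=cI+\sum_{i}b_{i}a_{i}a_{i}^{\top}$ and $b\in[0,\theta]^{N}$ is correct, and your derivative formula $\frac{d}{db_{k}}\Verts{B\pp{b}u}_{2}^{2}=2\pars{a_{k}^{\top}u}\pars{a_{k}^{\top}B\pp{b}u}$ is actually more accurate than the paper's, which contains an algebra slip (it differentiates $cI+\sum_{i}b_{i}a_{i}a_{i}^{\top}$ in $b_{k}$ to $cI+b_{k}a_{k}a_{k}^{\top}$ rather than $a_{k}a_{k}^{\top}$). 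However, the crucial sign step fails, in your write-up exactly as in the paper's. Writing $2\pars{a_{k}^{\top}u}\pars{a_{k}^{\top}B\pp{b}u}=2\tr\pars{a_{k}a_{k}^{\top}\,B\pp{b}\,uu^{\top}}$, this is a trace of a product of \emph{three} PSD matrices, equivalently $a_{k}^{\top}Pa_{k}$ for the non-symmetric matrix $P=B\pp{b}\,uu^{\top}$. That $P$ has non-negative eigenvalues (true for a product of two PSD matrices) does not make $a_{k}^{\top}Pa_{k}$ non-negative: for non-symmetric $P$ the quadratic form is governed by the symmetric part, not the spectrum. Concretely, take $d=N=2$, $c=0$, $a_{1}=\pp{1,0}^{\top}$, $a_{2}=\pp{-1,1}^{\top}$, $u=\pp{1,2}^{\top}$, $b=\pp{b_{1},0}$: then $a_{2}^{\top}u=1$ while $a_{2}^{\top}B\pp{b}u=-b_{1}$, so $\frac{d}{db_{2}}\Verts{B\pp{b}u}_{2}^{2}=-2b_{1}<0$. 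Monotonicity in each coordinate of $b$ is simply false, and so is the box-maximum conclusion: here $B\pp{b}u=\pp{b_{1}-b_{2},\,b_{2}}^{\top}$, whose squared norm over $[0,\theta]^{2}$ is maximized at $b=\pp{0,\theta}$ with value $2\theta^{2}$, strictly larger than the value $\theta^{2}$ at $b=\theta\mathbf{1}$.

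This gap is not patchable within your (or the paper's) scheme, because the statement itself fails: the mixed values $b_{1}=0$, $b_{2}=\theta$ are realizable by a single $\phi$ once the intervals $[a_{i}^{\top}z,a_{i}^{\top}y]$ are disjoint. With the vectors above, $z=\pp{2,0}$ and $y=\pp{3,2}$ give intervals $[2,3]$ and $[-2,-1]$; choosing a smooth $\phi$ with $\phi''=\theta$ on $(-\infty,-1]$, $\phi''=0$ on $[0,\infty)$, and $0\leq\phi''\leq\theta$ throughout yields $\nabla f\pp{y}-\nabla f\pp{z}=\theta a_{2}$ with norm $\sqrt{2}\,\theta$, while $\Verts{M\pp{y-z}}_{2}=\theta$; including the quadratic term preserves the violation for all $0\leq c<\theta/2$. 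What the hypotheses do yield is the operator sandwich $0\preceq\bar{B}\preceq M$ for the averaged Hessian $\bar{B}$, hence the \emph{scalar} bound $\Verts{\nabla f\pp{y}-\nabla f\pp{z}}_{2}\leq\lambda_{\max}\pp{M}\Verts{y-z}_{2}$; but $\Verts{\bar{B}u}_{2}\leq\Verts{Mu}_{2}$ for all $u$ would require $\bar{B}^{2}\preceq M^{2}$, and squaring is not operator monotone. Your alternative route (recasting $\frac{c}{2}\Verts{z}_{2}^{2}$ as coordinate summands with per-summand curvature bounds) reduces to the same false box-maximum claim and fails identically. So while you have reproduced the paper's proof in substance, the step both rely on is invalid, and the theorem needs a stronger hypothesis (e.g. simultaneously diagonalizable $a_{i}a_{i}^{\top}$, under which $B\pp{b}$ and $M$ share an eigenbasis and the comparison is immediate) or a weaker conclusion stated via $\lambda_{\max}\pp{M}$.
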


\begin{proof}
Suppose that $\nabla f_{0}\pp y-\nabla f_{0}\pp z=c\pp{y-z}.$ Then,
we have that
\begin{eqnarray*}
\Verts{\nabla f\pp y-\nabla f\pp z}_{2} & = & \Verts{\sum_{i=1}^{N}a_{i}\phi'\pp{a_{i}y}-\sum_{i=1}^{N}a_{i}\phi'\pp{a_{i}z}+c\pp{y-z}}_{2}\\
 & = & \Verts{\sum_{i=1}^{N}a_{i}\pars{\phi'\pp{a_{i}y}-\phi'\pp{a_{i}z}}+c\pp{y-z}}_{2}\\
 & = & \Verts{\sum_{i=1}^{N}a_{i}\int_{a_{i}^{\top}z}^{a_{i}^{\top}y}\phi''\pp tdt+c\pp{y-z}}_{2}\\
 & = & \Verts{\sum_{i=1}^{N}a_{i}\pars{a_{i}^{\top}y-a_{i}^{\top}z}b_{i}+c\pp{y-z}}_{2}\\
 &  & -\theta\leq b_{i}\leq\theta\\
 & = & \Verts{\pars{cI+\sum_{i=1}^{N}b_{i}a_{i}a_{i}^{\top}}\pars{y-z}}_{2}\\
 & \leq & \Verts{\pars{cI+\theta\sum_{i=1}^{N}a_{i}a_{i}^{\top}}\pars{y-z}}_{2}.
\end{eqnarray*}

The final inequality is justified by the following claim: $\Verts{\sum_{i=1}^{N}b_{i}a_{i}a_{i}^{\top}\pars{y-z}}_{2}^{2}$
is maximized over vectors $b$ with $0\leq b_{i}\leq\theta$ by setting
$b_{i}=\theta$ always. To establish this claim observe that
\begin{eqnarray*}
\frac{d}{db_{k}}\Verts{\pars{cI+\sum_{i=1}^{N}b_{i}a_{i}a_{i}^{\top}}\pars{y-z}}_{2}^{2} & = & \frac{d}{db_{k}}\pars{\pars{cI+\sum_{i=1}^{N}b_{i}a_{i}a_{i}^{\top}}\pars{y-z}}^{\top}\pars{\pars{cI+\sum_{j=1}^{N}b_{j}a_{j}a_{j}^{\top}}\pars{y-z}}\\
 & = & 2\pars{\pars{cI+\sum_{i=1}^{N}b_{i}a_{i}a_{i}^{\top}}\pars{y-z}}^{\top}\frac{d}{db_{k}}\pars{cI+\sum_{i=1}^{N}b_{i}a_{i}a_{i}^{\top}}\pars{y-z}\\
 & = & 2\pars{y-z}^{\top}\pars{cI+\sum_{i=1}^{N}b_{i}a_{i}a_{i}^{\top}}\pars{cI+b_{k}a_{k}a_{k}^{\top}}\pars{y-z}\\
 & = & 2\tr\pars{cI+\sum_{i=1}^{N}b_{i}a_{i}a_{i}^{\top}}\pars{cI+b_{k}a_{k}a_{k}^{\top}}\pars{y-z}\pars{y-z}^{\top}\\
 & = & 2\tr\pars{cI+\sum_{i=1}^{N}b_{i}a_{i}a_{i}^{\top}}b_{k}a_{k}a_{k}^{\top}\pars{y-z}\pars{y-z}^{\top}\\
 &  & +2c\tr\pars{cI+\sum_{i=1}^{N}b_{i}a_{i}a_{i}^{\top}}\pars{y-z}\pars{y-z}^{\top}\\
 & = & 2b_{k}\tr a_{k}^{\top}\pars{y-z}\pars{y-z}^{\top}\pars{cI+\sum_{i=1}^{N}b_{i}a_{i}a_{i}^{\top}}a_{k}\\
 &  & +2c\tr\pars{y-z}^{\top}\pars{cI+\sum_{i=1}^{N}b_{i}a_{i}a_{i}^{\top}}\pars{y-z}\\
 & \geq & 0.
\end{eqnarray*}
The last inequality follows from the fact that
\[
\pars{cI+\sum_{i=1}^{N}b_{i}a_{i}a_{i}^{\top}}
\]
and
\[
\pars{y-z}\pars{y-z}^{\top}
\]
are both real, symmetric positive definite matrices.
\end{proof}

\clearpage{}

\newpage{}

\section{Specific Models}

\subsection{Linear Model}

Suppose that $p\pp{\z}=\N\pp{\z\vert0,\frac{1}{c}I}$ and $p\pp{y_{i}|\b x_{i},\z}=\N\pp{y_{i}\vert\z^{\top}\b x_{i},\frac{1}{b}}.$
Then, we have that
\begin{eqnarray*}
p\pp{\z}\prod_{i}p\pp{y_{i}\vert\b x_{i},\z} & \propto & \exp\pars{-\frac{1}{2c}\Verts{\z}^{2}-\sum_{i}\frac{1}{2b}\pp{y_{i}-\z^{\top}\b x_{i}}^{2}}\\
 & = & \exp\pars{-\frac{c}{2}\Verts{\z}^{2}-\sum_{i}\frac{b}{2}\pp{y_{i}-\z^{\top}\b x_{i}}^{2}}\\
 & = & \exp\pars{-\frac{c}{2}\Verts{\z}^{2}-\frac{b}{2}\Verts{\b y-X\z}_{2}^{2}}\\
 & = & \exp\pars{-\frac{c}{2}\Verts{\z}^{2}-\frac{b}{2}\Verts{\b y}_{2}^{2}+b\b y^{\top}X\z-\frac{b}{2}\z^{\top}X^{\top}X\z}\\
 & \propto & \exp\pars{b\b y^{\top}X\z-\frac{1}{2}\z^{\top}\pars{bX^{\top}X+cI}\z}\\
 & = & \exp\pars{\b a^{\top}\z-\frac{1}{2}\z^{\top}\Sigma^{-1}\z}\\
 & \propto & \exp\pars{-\frac{1}{2}\pp{\z-\Sigma\b a}\Sigma^{-1}\pp{\z-\Sigma\b a}}\\
 & = & \exp\pars{-\frac{1}{2}\pp{\z-\mu}\Sigma^{-1}\pp{\z-\mu}}
\end{eqnarray*}
\begin{eqnarray*}
\Sigma & = & \pars{bX^{\top}X+cI}^{-1}\\
\mu & = & \Sigma\b a\\
 & = & \pars{bX^{\top}X+cI}^{-1}bX^{\top}\b y\\
 & = & \pars{X^{\top}X+\frac{c}{b}I}^{-1}X^{\top}\b y
\end{eqnarray*}

\section{Reparameterization Stuff}

\subsection{Motivation}

Suppose that $\log p(z,x)$ is something of the form
\[
\log p\pp{z,x}=1^{\top}\phi\pp{Xz}.
\]
We have that
\[
\nabla_{z}\log p\pp{z,x}=X^{\top}\phi'\pp{Xz}
\]
and that
\[
\nabla_{z}^{2}\log p\pp{z,x}=X^{\top}\phi''\pp{Xz}X.
\]
If we suppose that $0\leq\phi''\leq\theta$ (for example this is true
with logistic regression with $\theta=\frac{1}{4}$) then we have
that
\[
0\preceq\nabla_{z}^{2}\log p\pp{z,x}\preceq\theta X^{\top}X.
\]
If we were to add a uniform prior, we'd have something like
\[
cI\preceq\nabla_{z}^{2}\log p\pp{z,x}\preceq cI+\theta X^{\top}X.
\]

On the other hand, for Bayesian regression, we'd have something like
\[
\theta X^{\top}X\preceq\nabla_{z}^{2}\log p\pp{z,x}\preceq\theta X^{\top}X
\]
with $\theta=1$. This offers much stronger possibilities for rescaling.

\subsection{Divergence}

Suppose that $\log p(z,x)$ is some distribution that is ``poorly
scaled''. That is, if we compute the condition number, it is quite
poor. On the other hand, it could be that for some $A$ and $b$,
$\log p\pp{Az+b,x}$ is much better-conditioned. The following lemma
shows that we are free to re-scale $p$ in whatever way we want and
then have $q$ target that rescaled distribution. Once that's done,
we can then transform $q$ back to the original space.

\begin{restatable}{lem1}{reparam}Suppose that $p_{\zr}\pp z$ is
some distribution and $p_{\r y}\pp y$ is the distribution of $A\zr+b,\ \zr\sim p_{\zr}$,
namely
\[
p_{\r y}\pp y=\frac{1}{\verts A}p_{\zr}\pp{A^{-1}\pp{y-b}}.
\]
Suppose that $q_{\r y}$ is some distribution which is ``close''
to $p_{\r y}$. If we define
\[
q_{\zr}\pp z=\verts Aq_{\r y}\pp{Az+b},
\]
then $\KL{q_{\zr}}{p_{\zr}}=\KL{q_{\r y}}{p_{\r y}}.$\end{restatable}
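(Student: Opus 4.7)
The plan is to unfold the definition of KL divergence as an integral, substitute in the given relations between the two pairs of densities, and then apply a single affine change of variables. First I would write
\[
\KL{q_{\zr}}{p_{\zr}} = \int q_{\zr}\pp{z}\log\frac{q_{\zr}\pp{z}}{p_{\zr}\pp{z}}\,dz.
\]
The hypothesis $p_{\r y}\pp{y} = \verts{A}^{-1}\,p_{\zr}\pp{A^{-1}\pp{y-b}}$ is equivalent, via the substitution $y = Az+b$, to $p_{\zr}\pp{z} = \verts{A}\,p_{\r y}\pp{Az+b}$. Combined with the definition $q_{\zr}\pp{z} = \verts{A}\,q_{\r y}\pp{Az+b}$, the two factors of $\verts{A}$ cancel inside the logarithm, giving
\[
\log\frac{q_{\zr}\pp{z}}{p_{\zr}\pp{z}} = \log\frac{q_{\r y}\pp{Az+b}}{p_{\r y}\pp{Az+b}}.
\]

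Next I would perform the change of variables $y = Az + b$, for which $dy = \verts{A}\,dz$. The remaining factor of $\verts{A}$ arising from $q_{\zr}\pp{z}$ combines with $dz = dy/\verts{A}$ to leave the integrand as $q_{\r y}\pp{y}\log\pp{q_{\r y}\pp{y}/p_{\r y}\pp{y}}\,dy$. Since $A^{-1}$ is assumed to exist, the affine map is a bijection from $\R^d$ onto itself, so the domain of integration is preserved. The resulting integral is exactly $\KL{q_{\r y}}{p_{\r y}}$, completing the argument.

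There is no real obstacle here; the statement is the standard fact that KL divergence is invariant under bijective reparameterizations, specialized to the affine case. The only thing worth watching is the Jacobian bookkeeping: the factor $\verts{A}$ appears in three places (twice in the two density transforms and once in the measure transform), and the cancellation must produce zero net powers of $\verts{A}$, as indeed it does. As a sanity check one can also verify that $q_{\zr}$ is a valid density by the same change of variables, which confirms the $\verts{A}$ normalization in its definition.
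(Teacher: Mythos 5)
Your proof is correct and follows essentially the same route as the paper's: substitute the density relations so the $\verts A$ factors cancel inside the logarithm, then apply the affine change of variables $y = Az+b$, with the remaining $\verts A$ from $q_{\zr}$ absorbed by the Jacobian of the substitution. The only cosmetic difference is that the paper computes $\E_{\zr\sim q_{\zr}}\log\pp{p_{\zr}/q_{\zr}}$ (the negative KL) while you work directly with $\KL{q_{\zr}}{p_{\zr}}$, which changes nothing.
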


\subsection{Concrete}

\begin{restatable}{lem1}{posdefproduct}If $B\preceq C$ then $A^{\top}BA\preceq A^{\top}CA.$\end{restatable}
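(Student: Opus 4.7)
The plan is to prove the lemma directly from the definition of the Loewner (positive semidefinite) ordering. Recall that $B \preceq C$ means exactly that $C - B$ is positive semidefinite, i.e. $y^{\top}(C-B)y \geq 0$ for every vector $y$ of compatible dimension. The goal is then to show that $A^{\top}(C-B)A$ is positive semidefinite.

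First I would rewrite the claim as $A^{\top}CA - A^{\top}BA = A^{\top}(C-B)A \succeq 0$, reducing the statement to the well-known one-line fact that congruence transformations preserve positive semidefiniteness. Next, I would prove this fact by picking an arbitrary vector $x$ and computing
\[
x^{\top}A^{\top}(C-B)Ax = (Ax)^{\top}(C-B)(Ax),
\]
then setting $y = Ax$ and invoking the hypothesis $y^{\top}(C-B)y \geq 0$ to conclude that the quadratic form is nonnegative for every $x$. Since $x$ was arbitrary, this gives $A^{\top}(C-B)A \succeq 0$, and hence $A^{\top}BA \preceq A^{\top}CA$, as desired.

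There is no genuinely hard step here; the only thing to be mildly careful about is that $A$ need not be square or invertible, so the argument should be phrased purely in terms of the quadratic form rather than anything like a change of basis. A one-line proof of this form suffices, and it is robust to arbitrary (rectangular) $A$ as long as the product $A^{\top}BA$ and $A^{\top}CA$ are defined, which is implicit from the statement.
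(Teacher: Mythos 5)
Your proposal is correct and matches the paper's own proof essentially verbatim: both reduce the claim to showing $A^{\top}(C-B)A \succeq 0$ and verify it by evaluating the quadratic form at an arbitrary $x$ via the substitution $z = Ax$. If anything, your version is slightly more careful than the paper's, which says ``positive definite'' where ``positive semidefinite'' is meant and does not note that $A$ may be rectangular.
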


\subsection{Proofs}

\reparam*
\begin{proof}
In more detail, we know that if $\r y=T\pp{\r z}$then $\P\pp{\zr=z}=\P\pp{\r y=T\pp z}\verts{T'\pp z}$.
In our case, we use $T\pp z=Az+b$ so we have that
\[
p_{\r z}\pp{z,x}=p_{\r y}\pp{Az+b,x}\verts A
\]

Intuitively, we should correspondingly define
\[
q_{\zr}\pp z=q_{\r y}\pp{Az+b,x}\verts A.
\]
Then, we have that
\begin{eqnarray*}
\E_{\zr\sim q_{\zr}}\log\frac{p_{\zr}\pp{\zr,x}}{q_{\zr}\pp{\zr,x}} & = & \E_{\zr\sim q_{\zr}}\log\frac{p_{\r y}\pp{Az+b,x}\verts A}{q_{\r y}\pp{Az+b,x}\verts A}\\
 & = & \int q_{\zr}\pp z\log\frac{p_{\r y}\pp{Az+b,x}}{q_{\r y}\pp{Az+b,x}}dz\\
 & = & \int\verts Aq_{\r y}\pp{Az+b,x}\log\frac{p_{\r y}\pp{Az+b,x}}{q_{\r y}\pp{Az+b,x}}dz\\
 & = & \int q_{\r y}\pp{y,x}\log\frac{p_{\r y}\pp{y,x}}{q_{\r y}\pp{y,x}}dy
\end{eqnarray*}
Where in the last line we apply 
\[
\int f\pp ydy=\int f\pars{T\pp z}\verts{\nabla T\pp z}dz
\]
with $f\pp y=q_{\r y}\pp{y,x}\log\frac{p_{\r y}\pp{y,x}}{q_{\r y}\pp{y,x}}$
and $T\pp z=Az+b.$
\end{proof}
\posdefproduct
\begin{proof}
Suppose that $B\preceq C$ meaning that $C-B$ is positive definite.
Then note that

\[
A^{\top}CA-A^{\top}BA=A^{\top}\pp{C-B}A
\]
is also positive definite, since for any $x$,
\begin{eqnarray*}
x^{\top}A^{\top}\pp{C-B}Ax & = & z^{\top}\pp{C-B}z,\ \ \ z=Ax.\\
 & \geq & 0.
\end{eqnarray*}

Thus we have that
\[
A^{\top}BA\preceq A^{\top}CA.
\]
\end{proof}

\section{Gradient Variance with a Full-Covariance Quadratic}

Suppose that $f\pp z=\frac{1}{2}\pp{\z-\zo}^{\top}M\pp{\z-\zo}.$
What is the gradient variance? The gradient is $\nabla f\pp{\z}=M\pp{\z-\zo}.$
Thus, we seem to get that
\begin{eqnarray*}
\E_{\ur\sim s}\Verts{\nabla_{\b w}f\pp{\T_{\b w}\pp{\ur}}}_{2}^{2} & = & \E\Verts{\nabla f\pp{\T_{\b w}\pp{\ur}}}_{2}^{2}\pars{1+\Verts{\ur}_{2}^{2}}\\
 & = & \E\Verts{M\ \pp{\T_{\b w}\pp{\ur}-\zo}}_{2}^{2}\pars{1+\Verts{\ur}_{2}^{2}}\\
 & = & \E\Verts{MC\ur+\b m-M\zo}_{2}^{2}\pars{1+\Verts{\ur}_{2}^{2}}\\
 & = & \pars{d+1}\Verts{\b m-M\zo}_{2}^{2}+\pars{d+\E\bb{\ur_{1}^{4}}}\Verts{MC}_{F}^{2}.
\end{eqnarray*}
The key thing, for this to work is showing that
\[
\Verts{\nabla f\pp y-\nabla f\pp z}_{2}\leq\Verts{M\pp{y-z}}_{2}.
\]
Certainly, if we had a property like that, we would be in business.

Claim: If $f$ is $M$-smooth in the above sense, then $\frac{1}{2}\z^{\top}M\z-f\pp{\z}$
is convex.

What does the above say about the Hessian? For very close $y$ and
$z$,
\[
\nabla f\pp y-\nabla f\pp z\approx\nabla^{2}f\pp z\pp{y-z}.
\]
Thus the bound sort of says that
\[
\Verts{\nabla^{2}f\pp z\pp{y-z}}_{2}^{2}\leq\Verts{M\pp{y-z}}_{2}^{2}.
\]
Or, essentially, that
\[
x^{\top}\pars{\nabla^{2}f\pp z}^{2}x\leq x^{\top}M^{2}x.
\]

\end{document}